%% file: paper.tex
\documentclass[]{jingdong}
\usepackage[toc,page,header]{appendix}

\input{common}
\definecolor{lightred}{rgb}{1.0, 0.9, 0.9}

\usepackage{cleveref}
\theoremstyle{plain}
\newtheorem{theorem}{Theorem}[section]
\newtheorem{proposition}[theorem]{Proposition}

\newtheorem{corollary}[theorem]{Corollary}
\theoremstyle{definition}

\theoremstyle{remark}
\usepackage{subcaption}
\newtheorem{remark}[theorem]{Remark}
\usepackage[textsize=tiny]{todonotes}
\usepackage{fontawesome}

\usepackage{graphicx}
\usepackage[edges]{forest}

\usepackage{algorithm}
\usepackage{algpseudocode}
\usepackage{float}
\usepackage{pgfplots}
\pgfplotsset{compat=1.18}

\usepackage{colortbl}
\definecolor{badred}{HTML}{FDDEDE}
\definecolor{goodgreen}{HTML}{E0F0E0}
\definecolor{headblue}{HTML}{E8EEF8}

\usepackage{array}

\tcbset{
  aibox/.style={
    width=\linewidth,
    top=8pt,
    bottom=4pt,
    colback=brown!5!white,
    colframe=brown!55!black,
    colbacktitle=brown!55!black,
    fonttitle=\bfseries,
    enhanced,
  }
}
\newcolumntype{C}[1]{>{\centering\let\newline\\\arraybackslash\hspace{0pt}}m{#1}}
\newtcolorbox{AIbox}[2][]{aibox,title=#2,#1}

\tcbset{
  rubriccasebox/.style={
    width=\linewidth,
    top=8pt,
    bottom=4pt,
    colback=orange!6!white,
    colframe=orange!65!black,
    colbacktitle=orange!65!black,
    fonttitle=\bfseries,
    enhanced,
  }
}
\newtcolorbox{RubricCaseBox}[2][]{rubriccasebox,title=#2,#1}

\tcbset{
  casebox/.style={
    width=\linewidth,
    top=8pt,
    bottom=4pt,
    colback=black!3!white,
    colframe=black!50,
    colbacktitle=black!50,
    fonttitle=\bfseries,
    enhanced,
  }
}
\newtcolorbox{CaseStudyBox}[2][]{casebox,title=#2,#1}

\tcbset{
  failcasebox/.style={
    width=\linewidth,
    top=8pt,
    bottom=4pt,
    colback=violet!5!white,
    colframe=violet!55!black,
    colbacktitle=violet!55!black,
    fonttitle=\bfseries,
    enhanced,
  }
}
\newtcolorbox{FailCaseBox}[2][]{failcasebox,title=#2,#1}

\title{Step-wise Rubric Rewards for LLM Reasoning}
\author[1,2,*]{Weichu Xie}
\author[1,*]{Haozhe Zhao}
\author[1,2,*]{Wenpu Liu}
\author[2]{Yongfu Zhu}
\author[1]{Liang Chen}
\author[3]{Minghao Ye}
\author[4]{Zirong Chen}
\author[1,2]{Yuqi Xu}
\author[5,2]{Shuai Dong}
\author[1,2]{Ziyue Wang}
\author[1]{Xinbo Xu}
\author[1]{Kean Shi}
\author[1]{Ruoyu Wu}
\author[2]{Xiaoying Zhang}
\author[2]{Wenqi Shao}
\author[1,\dag]{Baobao Chang}
\author[2]{Nan Duan}
\author[2]{Jiaqi Wang}

\affiliation[1]{Peking University}
\affiliation[2]{JD Explore Academy}
\affiliation[3]{Shanghai Jiao Tong University}
\affiliation[4]{Tsinghua University}
\affiliation[5]{Shanghai Innovation Institute}

\contribution{$^*$Equal contribution, $^\dag$Corresponding author.}

\input{abstract}

\date{\today}

\checkdata[Code]{\url{https://github.com/akarinmoe/SRaR}}
\correspondence{\email{chbb@pku.edu.cn}}
\begin{document}
\maketitle

\input{intro}

\input{conclusion}

\clearpage
\bibliographystyle{plainnat}
\bibliography{main}

\clearpage
\input{appendix}

\end{document}

%% file: common.tex
\usepackage{latexsym}
\usepackage[T1]{fontenc}
\usepackage[utf8]{inputenc}
\usepackage{microtype}
\usepackage{inconsolata}
\usepackage{graphicx}
\usepackage{hyperref}       
\usepackage{url}            
\usepackage{booktabs}       
\usepackage{amsfonts}       
\usepackage{nicefrac}       
\usepackage{stackengine}
\usepackage{microtype}      
\usepackage{colortbl}
\usepackage{xcolor}
\usepackage{amsmath}
\usepackage{amssymb}
\usepackage{amsthm}
\usepackage{mathrsfs}
\usepackage{pifont}
\usepackage{MnSymbol}
\usepackage{balance}
\usepackage{enumitem}
\usepackage{listings}
\usepackage{xcolor}
\usepackage{natbib}
\usepackage{multicol}

\AtBeginDocument{%
  \providecommand\BibTeX{{%
    \normalfont B\kern-0.5em{\scshape i\kern-0.25em b}\kern-0.8em\TeX}}}

\makeatletter
\DeclareRobustCommand\onedot{\futurelet\@let@token\@onedot}
\def\@onedot{\ifx\@let@token.\else.\null\fi}

\usepackage{setspace}
\usepackage{mathtools}

\usepackage{multirow,booktabs}
\usepackage{subcaption}

\newcommand{\owo}[1]{\textsc{OAgents}}

\definecolor{lightgreen}{RGB}{144, 238, 144} 
\definecolor{lightred}{RGB}{255, 105, 97}

\newtcolorbox{promptbox}[2][Prompt]{
colback=black!5!white,
arc=5pt, 
boxrule=0.5pt,
fonttitle=\bfseries,
title=#1, 
before upper={\small}, fontupper=\fontfamily{ptm}\selectfont,
colframe=#2,
}
\definecolor{ogreen}{RGB}{34, 139, 34}

%% file: abstract.tex
\abstract{
Reinforcement Learning with Verifiable Rewards (RLVR) is widely used to improve the reasoning capabilities of large language models, but its reward is derived only from the correctness of the final answer and provides no supervision over intermediate reasoning steps. Recent rubric-based methods, such as Rubrics as Rewards (RaR), introduce finer-grained supervision by scoring rollouts against a structured set of evaluation criteria. However, the resulting rubric scores are still aggregated into a single scalar that is applied to the entire response, leading to three structural weaknesses, namely loss of the multi-criterion rubric structure, uniform supervision of correct and incorrect reasoning steps, and reward hacking in the trained model through unbounded self-correction. On a sample of 1{,}000 problems, we find that 18.2\% of steps within answer-correct responses are themselves wrong yet positively rewarded, while 49.9\% of steps within answer-incorrect responses are in fact correct yet penalized. Therefore, we introduce \textbf{Step-wise Rubrics as Rewards (SRaR)}, an RLVR framework that (i) uses an LLM judge to attribute each rubric item to a specific reasoning step, (ii) normalizes the per-step rubric scores across rollouts so that only steps whose quality varies produce a learning signal, and (iii) combines the resulting per-step reward with the standard outcome reward through a decoupled advantage estimator that keeps the outcome-driven baseline stable. To support training, we further build a 16K-problem rubric dataset by contrastively distilling rubric items from correct and flawed reasoning paths sampled from a strong model and verified against the ground-truth answer. Across six mathematical reasoning benchmarks spanning multiple difficulty levels, SRaR improves the average accuracy over RaR by \textbf{3.57} points on Qwen3-8B-Non-Thinking and by \textbf{2.75} points on Qwen3-32B-Non-Thinking, raises the Faithful Reasoning Rate on AIME~2025 from 34.5\% to 46.7\% (every correct-answer trajectory of SRaR uses entirely correct reasoning steps), and reduces the rate of self-correction looping from 48.1\% to 26.5\%.
}

%% file: intro.tex
\begin{figure}[t]
    \centering
    \includegraphics[width=1\linewidth]{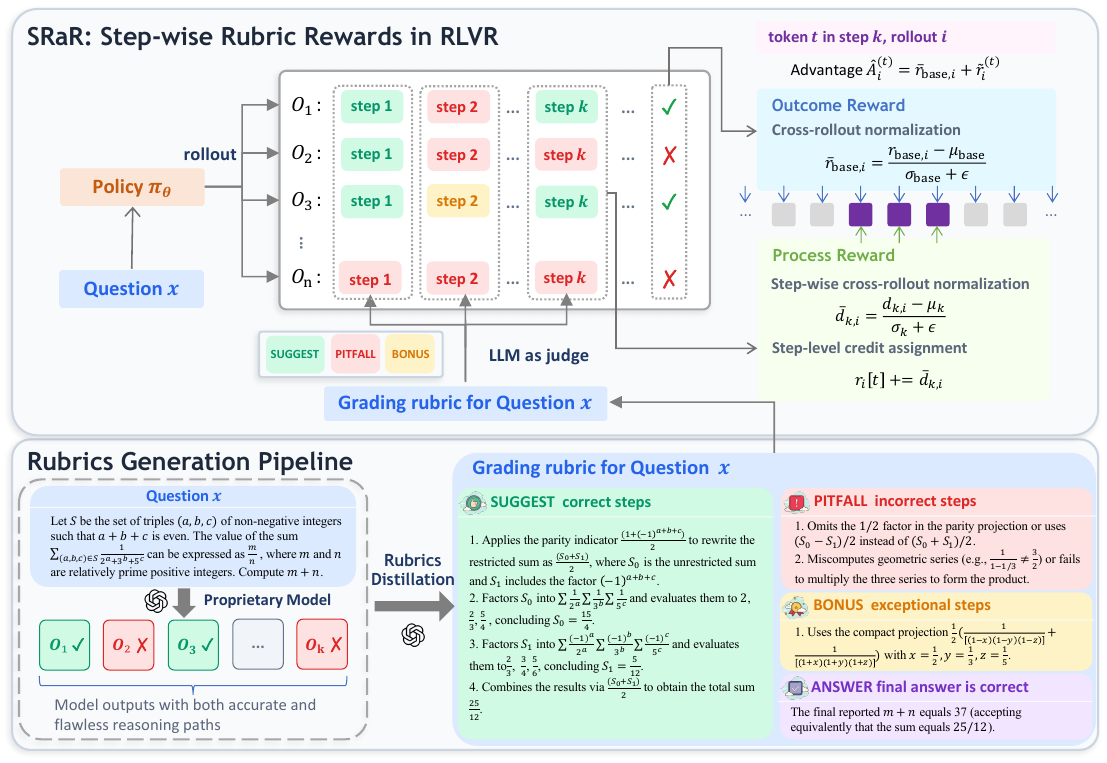}
    \vspace{-1.5em}
    \caption{\textbf{Overview of SRaR.} \textbf{Top:} The SRaR training framework. An LLM judge attributes each rubric item to a specific reasoning step; per-step rewards are normalized across rollouts and combined with outcome rewards via a decoupled advantage estimator. \textbf{Bottom:} The three-stage contrastive rubric distillation pipeline that generates instance-specific rubrics from correct and flawed paths.}
    \label{fig:method_overview}
    \vspace{-0.5em}
  \end{figure}
  
  \section{Introduction}
  
  Large language models (LLMs) have demonstrated remarkable reasoning capabilities across complex tasks such as mathematics, programming, and scientific problem solving~\citep{wei2022chain, openai2024o1, dong2025interleaved, chen2026babyvision, meituanlongcatteam2026longcat, si2026contextskillslanguagemodels}. To further enhance these capabilities, Reinforcement Learning with Verifiable Rewards (RLVR) has emerged as a widely adopted training paradigm in which rule-based verifiers automatically assess the correctness of model-generated answers and produce reward signals for policy optimization~\citep{deepseek2025r1, shao2024deepseekmath}. While effective, the RLVR reward is derived solely from the correctness of the final answer and provides no supervision over the quality of intermediate reasoning steps. To inject finer-grained supervision, recent work turns to rubrics, structured checklists of evaluation criteria that decompose response quality into multiple verifiable aspects and are scored automatically by an LLM judge~\citep{Hashemi_2024, kim2024prometheus}. Methods such as Rubrics as Rewards (RaR)~\citep{yuan2025rar} and RGR-GRPO~\citep{wang2025rgr} use these rubric scores as RL reward signals by aggregating the per-criterion scores into a single scalar reward applied to the entire response (Figure~\ref{fig:motivation}(a)).
  
  Despite providing richer supervision than pure outcome rewards, current rubric-based methods exhibit three structural weaknesses. \textbf{First, existing methods score each rubric item as binary pass/fail and then merge all per-step verdicts into a single response-level scalar, discarding both the type-level and step-level structure of the rubric.} The binary scoring erases the distinction between criteria that reinforce standard reasoning, flag common errors, and credit exceptional insights. The response-level aggregation further discards which step each criterion judges, leaving the policy gradient with no information about the multi-aspect structure that motivated using rubrics in the first place. \textbf{Second, this response-level reward is applied uniformly to every token, so correct and incorrect reasoning steps within the same response receive the same reward signal.} As a result, the policy gradient cannot distinguish steps that actually contributed to the response's quality from steps that should have been corrected. To measure how often this matters in practice, we sample 1{,}000 random solutions from Qwen3-8B-Non-Thinking on the DAPO training set and independently judge each reasoning step. 18.2\% of steps within answer-correct responses are themselves wrong and 49.9\% of steps within answer-incorrect responses are themselves correct (Figure~\ref{fig:motivation}(b)), so any single-scalar response-level reward assigns incorrect supervision to a substantial fraction of intermediate steps. \textbf{Third, the rubric score rewards visible reasoning effort such as self-correction and re-verification, so the model can inflate it without actually improving the answer.} Because existing methods feed this rubric score directly into the policy gradient, the model learns to over-produce such surface behaviors as a form of reward hacking. On AIME~2025, this manifests as severe self-correction looping: 48.1\% of RaR rollouts repeatedly rewrite and re-verify intermediate steps to inflate the rubric reward, often failing to terminate (Section~\ref{sec:looping_analysis}). \textbf{This raises a natural question: how can a reward signal be both interpretable and fine-grained in supervising a model's reasoning?}
  
  \textbf{First, step-attributed rubric judging} preserves the multi-aspect structure that existing aggregation discards. Each rubric item carries one of three types: SUGGEST for standard reasoning, PITFALL for known errors, and BONUS for exceptional insights. An LLM judge evaluates whether each item is satisfied and ties it to the reasoning step it judges, producing per-step, type-differentiated rewards that are retained at the step level rather than merged into a single response-level scalar. \textbf{Second, per-step token assignment with cross-rollout normalization} replaces the uniform token-level reward with step-aligned credit. Each step's reward is broadcast only to its own tokens and normalized across rollouts of the same prompt, so each token receives a learning signal aligned with its own step. \textbf{Third, a decoupled advantage estimator} computes the advantage as a sum of two separately normalized parts: one from the outcome reward, one from the rubric. The outcome part is the standard advantage from accuracy alone; the rubric part is added on top as a bounded, per-step adjustment. Because rubric noise stays in the rubric part, no rollout can inflate its overall advantage by gaming the rubric, removing the incentive behind self-correction looping and yielding more stable training.
  
  All three designs assume access to a problem-specific rubric, yet no public corpus provides rubrics paired with reasoning data at the scale needed for RL training. We therefore introduce a \textbf{contrastive rubric distillation} pipeline that constructs a 16K-problem rubric corpus in three stages: (i) sampling multiple reasoning rollouts from a strong model (GPT-5) per problem; (ii) verifying them against the ground-truth answer to split correct from flawed reasoning paths; and (iii) feeding both groups back to the same model to distill rubric items, each classified as SUGGEST, PITFALL, or BONUS, that capture what separates successful from failed reasoning. By grounding each rubric item in empirically observed failure modes rather than a single reference answer, this procedure yields criteria that target the errors a policy is actually likely to commit during RL training.

  We evaluate \textit{SRaR} on various reasoning benchmarks spanning multiple difficulty levels and model scales. \textit{SRaR} improves average accuracy over the strongest baseline by \textbf{3.57} points on Qwen3-8B and \textbf{2.75} points on Qwen3-32B, with consistent improvements across all benchmarks and model scales. Three targeted analyses on AIME~2025 attribute these gains to the three designs above. Step-attributed judging with per-step token assignment raises the Faithful Reasoning Rate, the fraction of trajectories with both a correct answer and all-correct intermediate steps, from 34.5\% to 46.7\%. The resulting step-level credit assignment reduces self-correction looping, a failure mode in which the model repeatedly regenerates or revisits reasoning steps without converging, from 48.1\% to 26.5\%. Finally, removing cross-rollout step normalization causes training to collapse. These results demonstrate that step-level rubric supervision improves both final-answer accuracy and intermediate reasoning quality.
  
  Our contributions are summarized as follows:
  \begin{enumerate}[leftmargin=1.5em]
      \item We identify three structural weaknesses of previous methods that use rubric scores as RL reward signals for finer-grained supervision and quantify their impact via a diagnostic study on 1{,}000 random model rollouts.
      \item We propose \textit{SRaR}, an RLVR framework that delivers step-level rubric supervision via three coordinated designs that provide interpretable and fine-grained reward signals for supervising model reasoning.
      \item We release a 16K-problem rubric dataset built via correctness-verified contrastive distillation. \textit{SRaR} outperforms RL baselines by a large margin on six reasoning benchmarks across different model scales, while raising the Faithful Reasoning Rate and substantially reducing the self-correction looping observed in other methods.
  \end{enumerate}
  
  \section{Related Work}
  \begin{figure}[t]
      \centering
      \begin{minipage}[c]{0.49\textwidth}
          \centering
          \includegraphics[width=\linewidth]{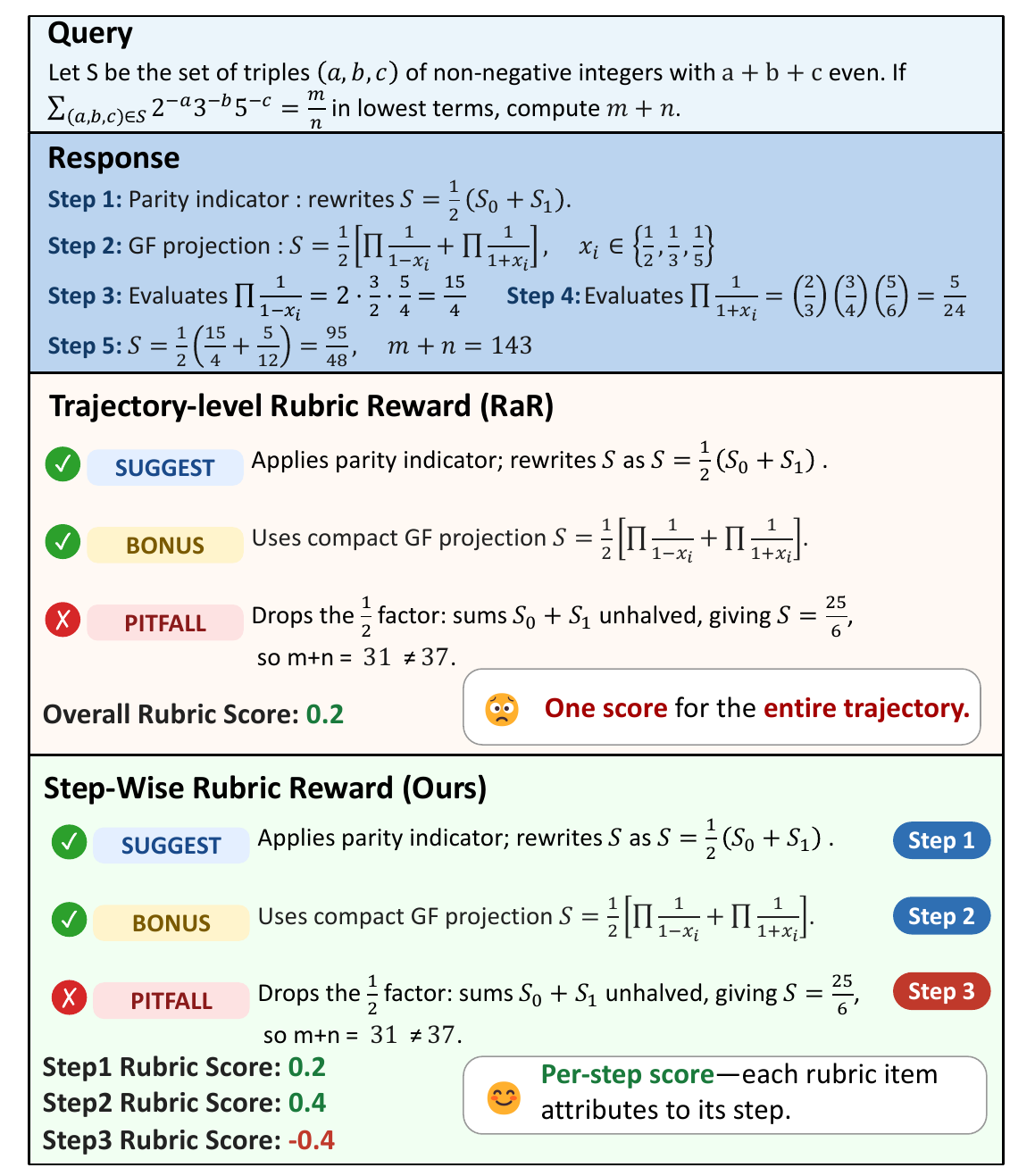}\\[1pt]
          {\small (a) Response-level (RaR) vs.\ step-wise (SRaR) rubric reward}
      \end{minipage}%
      \hspace{0.01\textwidth}%
      \begin{minipage}[c]{0.49\textwidth}
          \centering
          \vspace{-0.5em}
          \resizebox{\linewidth}{!}{%
          \renewcommand{\arraystretch}{2.15}
          \setlength{\tabcolsep}{3pt}
          \begin{tabular}{@{} l c c c l @{}}
          \toprule
          \rowcolor{headblue}
          \textbf{Category} & \textbf{\#\,Samples} & \textbf{\#\,Steps} & \textbf{Reward} & \textbf{Indiscriminate} \\
          \midrule
          \rowcolor{goodgreen}
          A$\checkmark$S$\checkmark$ & 194 & 1012 & $+$ & \textemdash \\
          \rowcolor{badred}
          A$\checkmark$S$\times$ & 194 & 916 & $+$ & 350 wrong \\
          \rowcolor{badred}
          A$\times$S$\checkmark$ & 18 & 101 & $-$ & 101 correct \\
          \rowcolor{badred}
          A$\times$S$\times$ & 591 & 3151 & $-$ & 1521 correct \\[1pt]
          \midrule
          \multicolumn{5}{@{}l}{\footnotesize \textbf{Indiscriminate reward under response-level signal:}} \\
          \multicolumn{5}{@{}l}{\footnotesize Wrong steps receiving same\,$+$\,as correct (A$\checkmark$):} \\
          \multicolumn{5}{@{}l}{\footnotesize \quad $\dfrac{350}{1012+916}=\dfrac{350}{1928}=\mathbf{18.2\%}$} \\
          \multicolumn{5}{@{}l}{\footnotesize Correct steps receiving same\,$-$\,as wrong (A$\times$):} \\[1pt]
          \multicolumn{5}{@{}l}{\footnotesize \quad $\dfrac{101+1521}{101+3151}=\dfrac{1622}{3252}=\mathbf{49.9\%}$} \\[4pt]
          \bottomrule
          \end{tabular}%
          }\\[0.8em]
          {\small (b) Indiscriminate reward under response-level signal}
      \end{minipage}
      \caption{\textbf{Motivation for step-wise rubric reward.} \textbf{(a)} RaR applies a single scalar rubric reward to the entire response, so every step receives the same signal; SRaR decomposes the rubric into per-step rewards aligned with each reasoning step. \textbf{(b)} Diagnostic study on 1{,}000 random Qwen3-8B-Non-Thinking solutions from the DAPO training set, each parsed into reasoning steps and independently judged (details in Section~\ref{sec:process_outcome_alignment}). A$\checkmark$/A$\times$ denotes correct/incorrect final answer; S$\checkmark$/S$\times$ denotes all steps correct / at least one step wrong. Under a single-scalar response-level reward, \textbf{18.2\%} of steps within answer-correct responses are wrong yet receive the same $+$ reward as correct steps, and \textbf{49.9\%} of steps within answer-incorrect responses are correct yet receive the same $-$ penalty as wrong steps.}
      \label{fig:motivation}
  \end{figure}
  
  \subsection{Process Supervision for Reasoning}
  
  RLVR with GRPO and DAPO~\citep{yu2025dapo} provides only sparse binary outcome signals with no guidance on intermediate steps. Process Reward Models (PRMs)~\citep{lightman2023lets, wang2024math, luo2024improve} score individual steps but require massive human annotations and produce opaque scalars with no explanation for why a step receives its score. GRPO-VPS~\citep{zhang2025grpo_vps} avoids auxiliary models by measuring the policy's own conditional probability of the correct answer at each step boundary, but the resulting signal is still a single opaque scalar per step that offers no interpretable diagnosis of reasoning quality.
  
  \subsection{Rubric-Based Evaluation and Rewards}
  
  Rubric-based evaluation decomposes quality into interpretable criteria without auxiliary models. \citet{Hashemi_2024} introduce LLM-Rubric for calibrated multi-aspect evaluation; \citet{kim2024prometheus} train open-source rubric-guided evaluators; RaR~\citep{yuan2025rar} and RGR-GRPO~\citep{wang2025rgr} use rubric scores directly as RL rewards. However, all existing rubric-based RL methods aggregate scores into a single trajectory-level scalar, forfeiting per-step credit: correct steps in failed trajectories are penalized and incorrect steps in successful ones are reinforced. SRaR addresses this overlooked problem by attributing each rubric item to its corresponding reasoning step and normalizing rewards at the step level.
  
  \section{Method}
  
  \subsection{Preliminary}
  \textbf{RLVR for LLM Reasoning.}
  Reinforcement Learning with Verifiable Rewards (RLVR) trains a policy $\pi_\theta$ by rewarding responses whose final answer matches the ground truth~\citep{deepseek2025r1, shao2024deepseekmath}. We adopt Group Relative Policy Optimization (GRPO)~\citep{shao2024deepseekmath}: for each prompt $x$, GRPO samples $n$ rollouts $\{y_i\}_{i=1}^n \sim \pi_\theta(\cdot \mid x)$, scores each with a scalar reward $r_i$, and computes the advantage by group normalization
  \begin{equation}
      A_i = \frac{r_i - \mu}{\sigma + \epsilon}, \quad \mu = \frac{1}{n}\sum_{j=1}^{n} r_j, \quad \sigma^2 = \frac{1}{n}\sum_{j=1}^{n}(r_j - \mu)^2.
  \end{equation}
  The same scalar $A_i$ is broadcast as the per-token advantage at every position $t$ of $y_i$.
  
  \textbf{Problem Formulation.}
  Each problem $x$ in our training set $\mathcal{D} = \{(x, a^*, \mathcal{R}_x)\}$ carries a ground-truth answer $a^*$ and a rubric $\mathcal{R}_x = \{(c_j, \tau_j)\}_{j=1}^{M}$ with criterion text $c_j$ and type $\tau_j \in \{\texttt{SUGGEST}, \texttt{PITFALL}, \texttt{BONUS}\}$. Given a response $y$ with $T$ tokens organized into $K$ steps with token spans $\{(a_k, b_k)\}_{k=1}^{K}$, our task is to construct a token-level advantage $\hat{A}_i^{(t)}$ for each rollout $i$ and position $t$, jointly informed by outcome correctness and rubric verdicts, as a drop-in replacement for $A_i$.

  \subsection{Rubrics Design}
  
  A rubric $\mathcal{R}_x$ comprises three item types, each contributing a distinct sign to the policy gradient (Section~\ref{subsec:srar}).
  \textbf{SUGGEST} items reinforce standard problem-solving paradigms (clear step structure, logical coherence, explicit derivations), contributing positive deltas.
  \textbf{PITFALL} items penalize known failure modes (arithmetic errors, logical leaps, hallucinations), contributing negative deltas.
  \textbf{BONUS} items reward valuable optional behaviors (alternative solution paths, cross-validation), contributing additional positive deltas.
  This three-aspect schema separates baseline correctness, error avoidance, and exceptional quality, each bounded by a per-type budget.
  
  \subsection{Step-wise Rubrics Rewards in RLVR (SRaR)}
  \label{subsec:srar}
  
  Figure~\ref{fig:method_overview} (top) provides an overview. SRaR realizes three coordinated designs: \textbf{step-attributed rubric judging}, \textbf{per-step token assignment with cross-rollout normalization}, and a \textbf{decoupled advantage estimator}.
  
  \textbf{Step-attributed rubric judging.}
  An LLM judge ties each rubric item to the reasoning step it judges, producing a per-step, per-criterion verdict instead of a single response-level scalar. To enable step attribution, the policy is prompted to emit responses in a structured step format (``\texttt{\#\#\# Step N:} ...'') with the final answer in \texttt{\textbackslash boxed\{\}}, and step token spans $\{(a_k, b_k)\}$ are recovered via the tokenizer's offset mapping. For each response $y$, the judge $\mathcal{J}$~\citep{zheng2024judging} returns
  \begin{equation}
      \mathcal{J}(y, \mathcal{R}_x) = \{(s_j, k_j)\}_{j=1}^{M}, \quad s_j \in \{0, 1\}, \; k_j \in \{1, \ldots, K\},
  \end{equation}
  where $s_j$ indicates whether criterion $j$ is satisfied and $k_j$ records the step it judges.
  
  \textbf{Per-step token assignment with cross-rollout normalization.}
  Each judged rubric item is converted into a per-step reward, normalized across rollouts at the step level, and broadcast only to the tokens of the step it judges. This replaces the uniform token-level reward of standard GRPO with step-aligned credit.
  
  We first map each verdict to a signed delta whose magnitude depends on its rubric type:
  \begin{equation}
      \delta_j = \begin{cases}
          +R_{\texttt{SUG}} / N_{\texttt{SUG}} & \tau_j = \texttt{SUGGEST}, \; s_j = 1 \\[2pt]
          -|R_{\texttt{PIT}}| / N_{\texttt{PIT}} & \tau_j = \texttt{PITFALL}, \; s_j = 1 \\[2pt]
          +R_{\texttt{BON}} / N_{\texttt{BON}} & \tau_j = \texttt{BONUS}, \; s_j = 1 \\[2pt]
          0 & \text{otherwise,}
      \end{cases}
  \end{equation}
  where $R_{\tau}$ is the total reward budget for type $\tau$ and $N_{\tau}$ counts the items of that type in the current sample, keeping each category's contribution bounded regardless of rubric list length. Deltas attributed to the same step $k$ are summed into a per-step raw rubric reward $d_{k,i} = \sum_{j: k_j = k} \delta_j$ for rollout $i$, representing the total rubric credit that step $k$ receives before normalization.
  
  To extract a learning signal that reflects relative step quality across rollouts, we normalize $d_{k,i}$ per step across rollouts of the same prompt:
  \begin{equation}
      \bar{d}_{k,i} = \frac{d_{k,i} - \mu_k}{\sigma_k + \epsilon}, \quad \mu_k = \frac{1}{|\mathcal{G}_k|}\sum_{i \in \mathcal{G}_k} d_{k,i},
  \end{equation}
  where $\mathcal{G}_k$ is the set of rollouts whose judge evaluation contains at least one rubric item attributed to step $k$, and $\bar{d}_{k,i} = 0$ when $|\mathcal{G}_k| \leq 1$ since a single sample provides no relative signal. This mirrors GRPO's group normalization, applied independently per step: steps that are consistently good or bad across rollouts receive near-zero advantage, while steps whose quality varies receive a strong learning signal. The normalized $\bar{d}_{k,i}$ is then broadcast to every token within the step's span:
  \begin{equation}
      \tilde{r}_i^{(t)} = \bar{d}_{k,i} \quad \text{for all } t \in [a_k, b_k],
  \end{equation}
  so that all tokens of a reasoning step share the same rubric-derived signal.
  
  \textbf{Decoupled advantage estimator.}
  A naive substitution of the combined base-and-rubric reward into the GRPO advantage would let rubric noise enter the group baseline, allowing a rollout to inflate its overall advantage by gaming the rubric. We instead compute the advantage as a sum of two separately normalized parts:
  \begin{equation}
      \hat{A}_i^{(t)} = \underbrace{\frac{r_{\text{base},i} - \mu_{\text{base}}}{\sigma_{\text{base}} + \epsilon}}_{\text{outcome advantage}} + \underbrace{\tilde{r}_i^{(t)}}_{\text{rubric offset}},
  \end{equation}
  where $r_{\text{base},i} = (1-\lambda)\,r_{\text{acc},i} + \lambda\,r_{\text{fmt},i}$ combines an accuracy reward $r_{\text{acc},i} \in \{0,1\}$ for whether the extracted answer is correct and a format reward $r_{\text{fmt},i} \in \{0,1\}$ for whether the response contains both step headers and a \texttt{\textbackslash boxed\{\}} answer, and $\mu_{\text{base}}, \sigma_{\text{base}}$ are computed across rollouts of the same prompt. The outcome term is the standard GRPO advantage from accuracy alone; the rubric term is the bounded, per-step offset constructed above. Because rubric noise stays in the offset and never enters the group baseline, no rollout can inflate its overall advantage by gaming the rubric, removing the incentive behind self-correction looping and yielding more stable training. We provide formal analyses of the estimator's zero-sum, bounded, and unbiased properties in Appendix~\ref{app:theory}. The full procedure is summarized in Algorithm~\ref{alg:srar}.
  
  \begin{algorithm}[t]
  \caption{SRaR: Step-wise Rubric Rewards in RLVR}
  \label{alg:srar}
  \begin{algorithmic}[1]
  \Require Policy $\pi_\theta$, training set $\mathcal{D}$ with rubrics, LLM judge $\mathcal{J}$, group size $n$
  \For{each training iteration}
      \State Sample prompts $\{x\}$ from $\mathcal{D}$; for each $x$, sample $n$ rollouts $\{y_i\}_{i=1}^n \sim \pi_\theta(\cdot \mid x)$
      \For{each prompt $x$}
          \For{$i = 1, \ldots, n$}
              \State $r_{\text{base},i} \gets (1 - \lambda)\, r_{\text{acc},i} + \lambda\, r_{\text{fmt},i}$
              \State Detect step spans $\{(a_k^i, b_k^i)\}_{k=1}^{K_i}$ via tokenizer offset mapping
              \State $\{(s_j, k_j)\}_j \gets \mathcal{J}(y_i, \mathcal{R}_x)$ \Comment{Step-attributed rubric judging}
              \State $d_{k,i} \gets \sum_{j:\, k_j = k} \delta_j$ for each step $k$
          \EndFor
          \For{each step $k$ present in any rollout}
              \State $\mathcal{G}_k \gets \{i : d_{k,i} \text{ exists}\}$; $\bar{d}_{k,i} \gets 0$ if $|\mathcal{G}_k| \leq 1$ else $(d_{k,i} - \mu_k) / (\sigma_k + \epsilon)$
          \EndFor
          \For{$i = 1, \ldots, n$}
              \State $\tilde{r}_i^{(t)} \gets \bar{d}_{k,i}$ for all $t \in [a_k^i, b_k^i]$, each step $k$ \Comment{Per-step token assignment}
              \State $A_{\text{base},i} \gets (r_{\text{base},i} - \mu_{\text{base}}) / (\sigma_{\text{base}} + \epsilon)$
              \State $\hat{A}_i^{(t)} \gets A_{\text{base},i} + \tilde{r}_i^{(t)}$ for all $t$ \Comment{Decoupled advantage}
          \EndFor
      \EndFor
      \State Update $\theta$ via clipped PPO with token-level advantages $\{\hat{A}_i^{(t)}\}$
  \EndFor
  \end{algorithmic}
  \end{algorithm}


  
  
  
  \subsection{Contrastive Rubric Distillation}
  \label{subsec:rubric_pipeline}
  
  No public corpus provides problem-specific rubrics paired with reasoning data at scale. We construct a 16K-problem rubric corpus via contrastive distillation (Figure~\ref{fig:method_overview}, bottom) in three stages.
  \textbf{(1) Reasoning rollout generation.} We prompt GPT-5 to sample multiple step-by-step trajectories per problem with final answers in a standardized tag.
  \textbf{(2) Trajectory verification.} Each trajectory is matched against the ground-truth answer, partitioning rollouts into correct and flawed paths.
  \textbf{(3) Contrastive rubric distillation.} Both correct and flawed paths are fed back to the model, which contrasts structural differences to distill a fine-grained rubric list. Grounding items in empirically observed failures yields criteria targeting errors a policy is likely to commit during training.
  
  Prompts are in Appendix~\ref{app:prompt_rollout} and~\ref{app:prompt_rubric}; rubric examples in Appendix~\ref{app:rubric_examples}; quality evaluation in Appendix~\ref{app:rubric_quality}; judge validation in Appendix~\ref{app:judge_agreement}.

  \section{Experiment}
  
  \subsection{Experimental Setup}
  
  \textbf{Training Details.}
  We build the training set by applying our pipeline to the DAPO math dataset (statistics in Appendix~\ref{app:dataset-stats}) and adopt the same hyperparameters and training setting as DAPO except for the reward signal.
  Policy models are Qwen3-8B-Non-Thinking and Qwen3-32B-Non-Thinking~\citep{yang2025qwen3}, trained on $8\times$ NVIDIA H200 GPUs with rubric budgets $R_{\texttt{SUG}} = 0.8$, $R_{\texttt{PIT}} = -1.0$, $R_{\texttt{BON}} = 1.0$ and format weight $\lambda = 0.1$.
  Details are in Appendix~\ref{app:hparams} and~\ref{app:prompt_format}.
  
  \textbf{Evaluation Configuration.}
  We evaluate on six benchmarks: AIME~2024/2025, AMC~2023, MATH500~\citep{hendrycks2021measuring}, Minerva Math~\citep{lewkowycz2022solving}, and Olympiad Bench~\citep{he2024olympiadbench}.
  Decoding uses temperature 0.7, top-$p$ 0.8, presence penalty 1.5, repetition penalty 1.0, max length 10{,}240 tokens.
  AIME~2024/2025 and AMC~2023 report Avg@32~\citep{wang2023selfconsistency}; remaining benchmarks use a single generation.
  
  \textbf{Baselines.}
  We compare against GRPO~\citep{shao2024deepseekmath}, DAPO~\citep{yu2025dapo}, RaR~\citep{yuan2025rar}, GRPO-VPS~\citep{zhang2025grpo_vps}, and RGR-GRPO~\citep{wang2025rgr}. The latter two are re-implemented under the same training recipe for a fair comparison.
  
  \textbf{Rubric LLM Judge.}
  We use GPT-OSS-20B as the rubric judge (prompt in Appendix~\ref{app:prompt_judge}). Robustness to judge choice is validated with Qwen3-30B-A3B-Instruct as an alternative (Appendix~\ref{app:judge_robustness}).
  
  \subsection{Experimental Results}
  
  \subsubsection{Main Results}
  
  \begin{table}[t]
  \centering
  \caption{Main results on six mathematical reasoning benchmarks for Qwen3-8B-Non-Thinking and Qwen3-32B-Non-Thinking. $\Delta$ denotes the improvement of SRaR over the strongest existing baseline (RaR). The best results are \textbf{bolded}, second best are \underline{underlined}, and our method is highlighted in \colorbox{lightred}{light red}. For AIME~2024/2025 and AMC~2023, we report Avg@32.}
  \label{tab:main_results_8b}
  \vspace{0em}
  \resizebox{0.95\textwidth}{!}{%
  \begin{tabular}{l ccc cccc}
  \toprule
  \textbf{Method} & \textbf{AIME24} & \textbf{AIME25} & \textbf{AMC23} & \textbf{MATH500} & \textbf{Minerva} & \textbf{Olympiad} & \textbf{Avg.} \\
  \midrule
  \rowcolor{gray!15}
  \multicolumn{8}{l}{\textit{Qwen3-8B-Non-Thinking}} \\
  Baseline & 26.88 & 23.33 & 72.03 & 86.8 & 55.15 & 56.23 & 53.40 \\
  + GRPO & 36.56 & 27.71 & 79.06 & 89.2 & 55.51 & 63.20 & 58.54 \\
  + GRPO-VPS & 44.58 & 32.40 & 83.20 & 88.60 & \underline{61.03} & 60.24 & 61.68 \\
  + RGR-GRPO & 45.31 & 34.69 & 81.09 & 90.40 & \underline{61.03} & 66.91 & 63.24 \\
  + DAPO & 52.60 & 41.35 & \underline{90.23} & 92.0 & 59.56 & \underline{67.80} & 67.26 \\
  + RaR & \underline{57.50} & \underline{43.23} & 88.20 & \underline{92.6} & \textbf{64.71} & 67.51 & \underline{68.96} \\
  \rowcolor{lightred}
  + \textbf{SRaR (Ours)} & \textbf{61.15} & \textbf{49.06} & \textbf{92.73} & \textbf{94.4} & \textbf{64.71} & \textbf{73.15} & \textbf{72.53} \\
  $\Delta$ & \textcolor{green!50!black}{+3.65} & \textcolor{green!50!black}{+5.83} & \textcolor{green!50!black}{+4.53} & \textcolor{green!50!black}{+1.80} & \textcolor{green!50!black}{+0.00} & \textcolor{green!50!black}{+5.64} & \textcolor{green!50!black}{+3.57} \\
  \midrule
  \rowcolor{gray!15}
  \multicolumn{8}{l}{\textit{Qwen3-32B-Non-Thinking}} \\
  Baseline & 27.19 & 21.67 & 71.48 & 85.8 & 62.87 & 58.75 & 54.63 \\
  + GRPO & 37.92 & 30.21 & 80.78 & 91.0 & 63.97 & 66.02 & 61.65 \\
  + GRPO-VPS & 43.02 & 26.56 & 80.08 & 90.20 & 63.97 & 64.54 & 61.40 \\
  + RGR-GRPO & 47.71 & 36.46 & 87.42 & 94.40 & \underline{68.01} & \underline{72.40} & 67.73 \\
  + DAPO & 61.25 & 47.81 & 93.67 & \textbf{95.4} & 63.60 & 72.11 & 72.31 \\
  + RaR & \underline{63.65} & \underline{50.94} & \underline{94.22} & \underline{94.8} & 65.44 & 69.29 & \underline{73.06} \\
  \rowcolor{lightred}
  + \textbf{SRaR (Ours)} & \textbf{64.17} & \textbf{51.77} & \textbf{94.77} & \textbf{95.4} & \textbf{72.06} & \textbf{76.71} & \textbf{75.81} \\
  $\Delta$ & \textcolor{green!50!black}{+0.52} & \textcolor{green!50!black}{+0.83} & \textcolor{green!50!black}{+0.55} & \textcolor{green!50!black}{+0.60} & \textcolor{green!50!black}{+6.62} & \textcolor{green!50!black}{+7.42} & \textcolor{green!50!black}{+2.75} \\
  \bottomrule
  \end{tabular}%
  }
  \vspace{-0.5em}
  \end{table}
  
  Table~\ref{tab:main_results_8b} reports the main results. SRaR achieves the highest average accuracy at both scales: 72.53 on Qwen3-8B (+3.57 over RaR) and 75.81 on Qwen3-32B (+2.75 over RaR), and improves over GRPO-VPS by 10.85/14.41 points and over RGR-GRPO by 9.29/8.08 points. The advantage over GRPO-VPS stems from rubric items describing specific reasoning behaviors rather than relying on probability increments of a reference answer, enabling credit attribution to step content rather than statistical effects. The advantage over RaR and RGR-GRPO comes from routing each rubric item to the step it evaluates, eliminating indiscriminate supervision where correct steps in incorrect trajectories are penalized. We validate this in Section~\ref{sec:process_outcome_alignment}.
  
  The gains concentrate on longer benchmarks: at 8B, AIME~2025 (+5.83), Olympiad Bench (+5.64), and AMC~2023 (+4.53) improve most, while MATH500 (+1.80) and Minerva move less. Across scales, SRaR raises the base model's average by 19.13 points at 8B and 21.18 points at 32B. Training dynamics are in Appendix~\ref{app:training_dynamics}.
  
  \subsubsection{Think Longer and Better}
  \label{sec:think_long_better}
  
  \begin{figure}[t]
      \centering
      \begin{minipage}[t]{0.48\textwidth}
          \centering
          \vspace{0pt}
          \includegraphics[width=\textwidth]{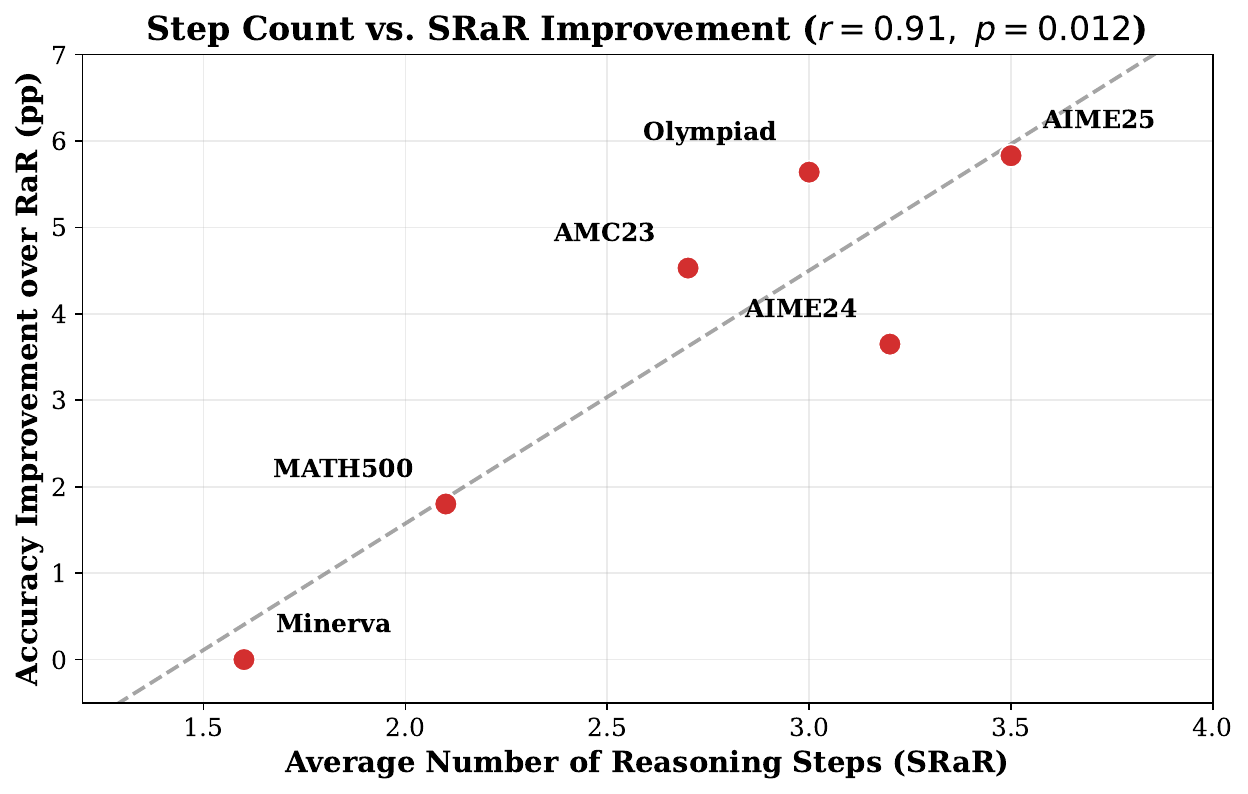}
      \end{minipage}
      \hfill
      \begin{minipage}[t]{0.5\textwidth}
          \centering
          \vspace{0pt}
          \includegraphics[width=\textwidth]{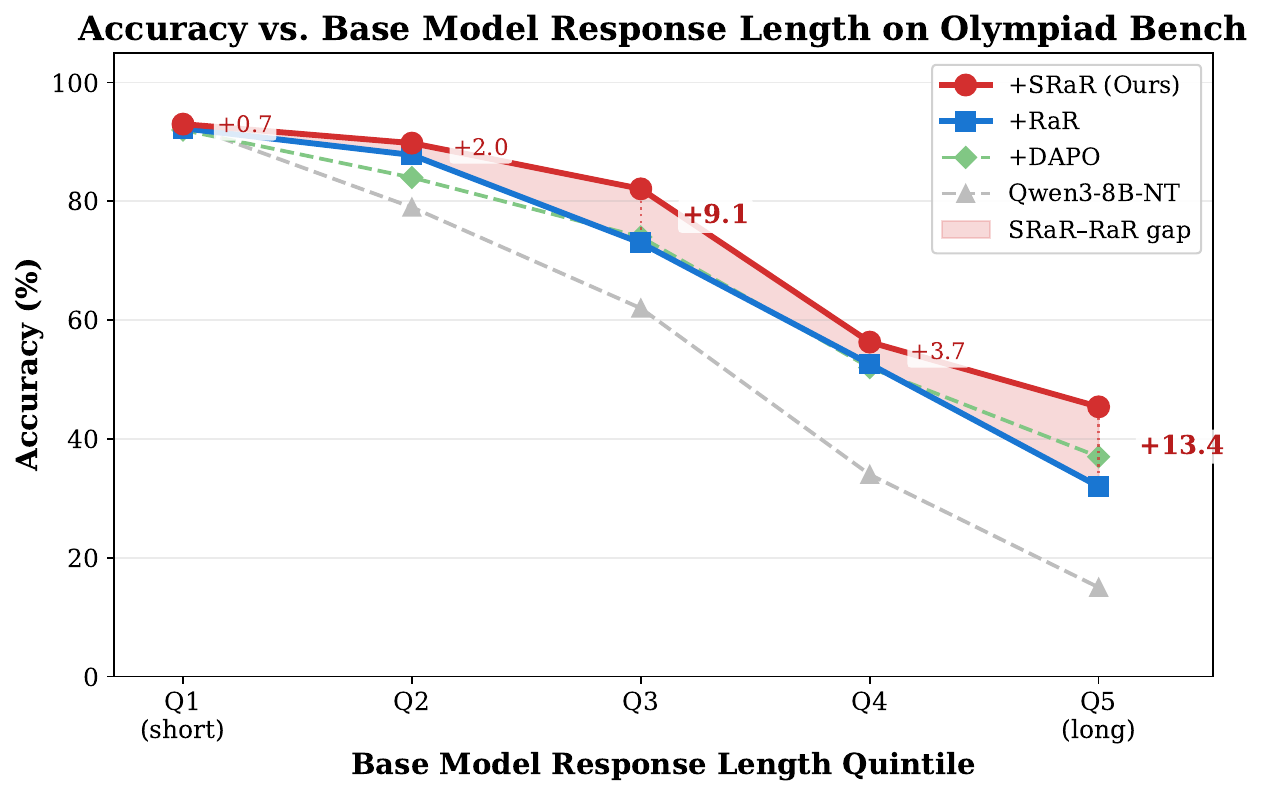}
      \end{minipage}
      \caption{\textbf{(Left)} Correlation between the average number of reasoning steps per benchmark and SRaR's accuracy improvement over RaR (Pearson $r = 0.91$, $p = 0.012$). Benchmarks requiring more reasoning steps benefit more from step-wise credit assignment. \textbf{(Right)} Accuracy vs.\ problem difficulty on Olympiad Bench (674 problems), where difficulty quintiles are defined by the base model's response length. SRaR outperforms all baselines across all five quintiles, with the largest margin on the hardest problems (Q5: +13.4 pp over RaR).}
      \label{fig:think_longer_better}
  \end{figure}
  
  \textbf{Why gains vary across benchmarks.}
  We correlate the average number of reasoning steps per benchmark with SRaR's improvement over RaR (Figure~\ref{fig:think_longer_better}, left). The correlation is strong (Pearson $r = 0.91$, $p = 0.012$): benchmarks requiring more steps (AIME~2025: 3.6, Olympiad: 3.0) benefit substantially more than those with shorter solutions (Minerva: 1.6).
  
  \textbf{Harder problems benefit more.}
  We stratify Olympiad Bench problems into difficulty quintiles by base model response length (Figure~\ref{fig:think_longer_better}, right). SRaR outperforms RaR in all quintiles, with the gap widening from +0.7 pp (Q1) to +13.4 pp (Q5). On Q5, SRaR also generates shorter responses (3{,}645 vs.\ 5{,}933 tokens), confirming gains from effective reasoning rather than exhaustive self-revision.
  
  \subsection{Analysis}
  
  \subsubsection{Process-Outcome Alignment of Step-wise Rewards}
  \label{sec:process_outcome_alignment}
  
  To examine whether step-level reward assignment improves intermediate reasoning quality beyond final-answer accuracy, we conduct a diagnostic on AIME~2025 (single-pass). For each model, we parse solutions into steps and an independent LLM judge evaluates each step's correctness conditioned on preceding steps (prompt in Appendix~\ref{app:prompt_step_judge}).
  
  As shown in Table~\ref{tab:process_outcome}, the base model reaches only 60.8\% step accuracy, confirming that outcome-only supervision fails to prevent error accumulation. RaR raises step accuracy to 80.8\% but still has 2 correct-answer trajectories with erroneous steps (A$\checkmark$S$\times{=}2$) and 4 incorrect-answer trajectories with all-correct steps (A$\times$S$\checkmark{=}4$). SRaR achieves the highest step accuracy (85.2\%) and answer accuracy (46.7\%), and is the only model with A$\checkmark$S$\times{=}0$: every correct-answer trajectory uses entirely correct reasoning. SRaR also attains the highest Faithful Reasoning Rate (46.7\% vs.\ 34.5\%) and the lowest Misaligned Trajectory Rate (10.0\% vs.\ 20.7\%), showing that step-level rewards align intermediate quality with outcome correctness. Case studies are in Appendix~\ref{app:case_process_outcome}.
  
  \subsubsection{Reduction of Repetitive Looping}
  \label{sec:looping_analysis}
  
  A common failure mode of RLVR-trained models is repetitive looping: unbounded self-correction cycles that revisit computations without converging (case study in Appendix~\ref{app:case_looping}).
  We quantify this on 960 AIME~2025 samples (30 problems, 32 rollouts each), classifying a response as looping if any of four criteria holds: (i) $>20$ self-correction phrases; (ii) \texttt{\#\#\# Step 1} appearing more than once; (iii) repeated step headings; or (iv) $>10\%$ duplicate paragraphs.
  
  \begin{table}[t]
  \centering
  \caption{Step-level reasoning quality analysis on AIME~2025. \textbf{Step Acc.} is the percentage of steps judged correct. The four combination columns show counts per (answer $\times$ step correctness) category. \textbf{FRR} (Faithful Reasoning Rate) is A$\checkmark$S$\checkmark$\,/\,total ($\uparrow$), the fraction of trajectories with both a correct answer and all-correct steps. \textbf{MTR} (Misaligned Trajectory Rate) is (A$\checkmark$S$\times$ + A$\times$S$\checkmark$)\,/\,total ($\downarrow$), the fraction of trajectories whose step-level judgement disagrees with the outcome, and serves as the population-level analogue of the misalignment in Figure~\ref{fig:motivation}(b).}
  \label{tab:process_outcome}
  \vspace{0em}
  \small
  \setlength{\tabcolsep}{4pt}
  \begin{tabular}{l c c c c c c c c}
  \toprule
  \textbf{Model} & \textbf{Ans.} & \textbf{Step} & \multicolumn{4}{c}{\textbf{Answer $\times$ Step}} & \textbf{FRR} & \textbf{MTR} \\
  \cmidrule(lr){4-7}
  & \textbf{Acc.} & \textbf{Acc.} & \textbf{A$\checkmark$S$\checkmark$} & \textbf{A$\checkmark$S$\times$} & \textbf{A$\times$S$\checkmark$} & \textbf{A$\times$S$\times$} & \textbf{(\%)\,$\uparrow$} & \textbf{(\%)\,$\downarrow$} \\
  \midrule
  Qwen3-8B & 13.3 & 60.8 & 4 & \textbf{0} & 4 & 20 & 13.3 & \underline{13.3} \\
  + RaR  & \underline{41.4} & \underline{80.8} & \underline{10} & 2 & \underline{4} & \underline{12} & \underline{34.5} & 20.7 \\
  + \textbf{SRaR (Ours)} & \textbf{46.7} & \textbf{85.2} & \textbf{14} & \textbf{0} & \textbf{3} & \textbf{12} & \textbf{46.7} & \textbf{10.0} \\
  \bottomrule
  \end{tabular}
  \vspace{-1em}
  \end{table}
  
  \begin{table}[t]
  \centering
  \caption{Repetitive looping analysis on AIME~2025. \textbf{Loop Rate} is the percentage of samples exhibiting any form of looping. Accuracy is reported for looping vs.\ non-looping responses. \textbf{Mean S-C} denotes the average count of self-correction phrases per response.}
  \label{tab:looping}
  \vspace{0em}
  \small
  \begin{tabular}{l c c c c}
  \toprule
  \textbf{Model} & \textbf{Loop Rate (\%)} & \textbf{Acc. (loop)} & \textbf{Acc. (no loop)} & \textbf{Mean S-C} \\
  \midrule
  Qwen3-8B & 56.8 & 25.9 & 16.1 & \textbf{0.9} \\
  + RaR & 48.1 & \underline{24.2} & \textbf{53.4} & 25.4 \\
  + \textbf{SRaR (Ours)} & \textbf{26.5} & \textbf{42.5} & \underline{52.8} & \underline{14.8} \\
  \bottomrule
  \end{tabular}
  \vspace{-0.5em}
  \end{table}
  
  As shown in Table~\ref{tab:looping}, RaR's looping rate is 48.1\% with a mean of 25.4 self-correction phrases, and its looping samples achieve only 24.2\% accuracy vs.\ 53.4\% for non-looping ones. SRaR reduces the looping rate to 26.5\% with mean self-correction count 14.8, and its looping samples reach 42.5\% accuracy, suggesting productive reflection rather than uncontrolled repetition.
  
  We attribute this to step-level credit assignment: each step receives an independent reward, so the model can commit to completed steps rather than perpetually revisiting them.
  
  \subsection{Ablation Study}
  \textbf{Rubric aspect ablation.}
  We test whether PITFALL and BONUS provide value beyond Suggest-only. As shown in Table~\ref{tab:rubric_ablation}, SRaR outperforms RaR under both configurations (+2.86 with Suggest-only, +3.57 with multi-aspect), confirming the primary gain comes from step-wise credit assignment. Adding PITFALL and BONUS further raises accuracy from 71.39 to 72.53, indicating complementary supervision beyond positive guidance alone.
  
  \textbf{Method design ablation.}
  We ablate three components on Qwen3-8B (Table~\ref{tab:component_ablation} in Appendix~\ref{app:component_ablation}). Removing the step-structured format prompt yields 67.85 average accuracy, matching standard DAPO (67.26), since step boundaries are required for attribution. Removing cross-rollout step normalization causes collapse to 56.76: without normalization, group-invariant rubric signals dominate the gradient, and the model degenerates to a single step (Appendix~\ref{app:case_collapse}). Replacing the decoupled advantage with a merged RaR-style scalar yields 72.06, slightly below full SRaR (72.53), because merging injects rubric variance into the GRPO baseline statistics.

%% file: conclusion.tex
\section{Conclusion}

We present \textit{Step-wise Rubrics as Rewards (SRaR)}, an RLVR framework that delivers fine-grained supervision during RL training via step-level rubrics. Through \textit{step-attributed rubric judging} and \textit{cross-rollout step normalization}, each reasoning step receives a step-wise rubric reward, while a \textit{decoupled advantage estimator} keeps RL training stable against rubric noise. On six mathematical benchmarks, \textit{SRaR} outperforms the state-of-the-art by \textbf{3.57} points on Qwen3-8B and \textbf{2.75} points on Qwen3-32B, raises the Faithful Reasoning Rate from 34.5\% to 46.7\%, and reduces self-correction looping from 48.1\% to 26.5\%. We hope \textit{SRaR} provides a practical paradigm for delivering fine-grained rubric supervision to RL-trained reasoners, showing that the granularity at which supervision is applied matters as much as its content. We discuss limitations and future directions in Appendix~\ref{app:limitations}.

%% file: appendix.tex
\beginappendix

\section{Limitations}
\label{app:limitations}

We identify the following limitations of our work:

\begin{enumerate}[leftmargin=1.5em, itemsep=4pt]
    \item \textbf{Dependence on the LLM judge.} SRaR relies on an external LLM judge to evaluate rubric satisfaction and attribute each item to a specific reasoning step. Errors in step attribution or satisfaction judgments propagate into the reward signal. Although step-wise cross-rollout normalization mitigates the impact of systematic biases, judge inaccuracies on individual samples may still introduce noise into training.

    \item \textbf{Additional inference cost.} Each training iteration requires invoking the LLM judge on every rollout, increasing wall-clock time and computational cost compared to standard GRPO or trajectory-level rubric methods. This cost scales linearly with the number of rollouts and rubric items per prompt.

    \item \textbf{Domain scope.} Our experiments are restricted to mathematical reasoning. While the framework is domain-agnostic in principle, we have not verified its effectiveness on other multi-step reasoning tasks such as code generation, scientific reasoning, or logical deduction. The rubric generation pipeline may require domain-specific adaptation.

    \item \textbf{Rubric generation relies on a strong model.} The three-stage data pipeline uses a frontier model (GPT-5) to generate rollouts and distill rubrics. The quality of the resulting rubrics is bounded by the capabilities of this model, and the pipeline's applicability to settings where only weaker models are available remains unexplored.

    \item \textbf{Step boundary detection.} SRaR assumes that the policy model produces responses with explicit step headers (\texttt{\#\#\# Step N}). This requires a format prompt during training and may not generalize to free-form reasoning without structural delimiters.
\end{enumerate}

\section{Prompt Details}
\label{sec:prompts_details}

This section provides the full prompts used in our data pipeline and training procedure.

\subsection{Rollout Generation Prompt}
\label{app:prompt_rollout}

\begin{AIbox}{Prompt for Rollout Generation}
Solve the following math problem step by step. Please strictly follow these formatting instructions:

\begin{enumerate}[leftmargin=1.5em, itemsep=2pt, parsep=0pt, topsep=4pt]
\item \textbf{Step Format:} Organize your reasoning into clear steps. Start each step with the header ``\#\#\# Step N:'' (e.g., \#\#\# Step 1:, \#\#\# Step 2:).
\item \textbf{Final Answer:} The last line of your response must be enclosed in tags like this: ``$<$answer$>$Answer$<$/answer$>$'', where Answer is the calculated result.
\end{enumerate}

Problem: \texttt{\{Problem\}}
\end{AIbox}

\subsection{Rubric Distillation Prompt}
\label{app:prompt_rubric}

\begin{AIbox}{Rubric Distillation Prompt Template}
You are designing a grading rubric for math problem solutions.

\noindent The original problem statement is: \texttt{\{problem\_text\}}

\noindent You are given n model solutions with a correctness label. For each numbered solution: \texttt{\{solutions\_block\}}

\noindent The true correct answer for this problem is: $<$ANSWER$>$\texttt{\{gold\_answer\}}$<$/ANSWER$>$

\noindent Based on the samples and the true answer, create a concise, verifiable rubric. Output ONLY structured items using the tags below (one item per line):

\begin{itemize}[leftmargin=1.5em, itemsep=2pt, parsep=0pt, topsep=4pt]
    \item \textbf{$<$SUGGEST$>$}: Mandatory. One scoring checkpoint per line. Each checkpoint checks whether the candidate CoT covers a key reasoning step from the standard solution path. Make it specific and objectively checkable from the text.
    \item \textbf{$<$PITFALL$>$}: Optional. Only include if you are confident. Flags reasoning traps or common errors to avoid.
    \item \textbf{$<$BONUS$>$}: Optional. Only include if you are confident. Rewards especially clever, efficient, or elegant reasoning patterns.
    \item \textbf{$<$ANSWER$>$}: Mandatory. One line that checks whether the final answer matches the ground truth above (allow equivalent numeric forms).
\end{itemize}

\textbf{Guidelines:}
\begin{itemize}[leftmargin=1.5em, itemsep=2pt, parsep=0pt, topsep=2pt]
    \item Select only the most essential scoring points; total lines $\leq$ 8 (SUGGEST/ANSWER mandatory; PITFALL/BONUS only if sure). Do NOT pad to the limit; stop once the main reasoning chain is covered.
    \item Each $<$SUGGEST$>$ is exactly one scoring point and must be independently verifiable by an LLM judge from the candidate response.
    \item Do NOT include formatting or output-structure checks (e.g., headers, tags, markdown). Focus solely on mathematical / logical content steps and correctness of the final answer.
    \item Do not reference the given samples explicitly; generalize the rubric.
    \item Keep wording concise and actionable; no extra commentary before or after.
\end{itemize}
\end{AIbox}

\subsection{Training Format Prompt}
\label{app:prompt_format}

\begin{AIbox}{Format Prompt}
Solve the following math problem step by step. Follow these formatting rules:

\vspace{6pt}
\begin{enumerate}[leftmargin=1.5em, itemsep=2pt, parsep=0pt, topsep=2pt]
    \item \textbf{Steps:} Break your solution into multiple clear steps. Begin each step with ``\texttt{\#\#\# Step N:}'' (e.g., \texttt{\#\#\# Step 1:}, \texttt{\#\#\# Step 2:}, \texttt{\#\#\# Step 3:}).
    \item \textbf{Final Answer:} End your response with the answer inside \texttt{\textbackslash boxed\{\}}.
\end{enumerate}

\vspace{6pt}
Problem: \texttt{\{Problem\}}
\end{AIbox}

\subsection{LLM Judge Prompt}
\label{app:prompt_judge}

\begin{AIbox}{LLM Judge Prompt}
\textbf{Problem}\\
\texttt{\{problem\}}

\vspace{6pt}
\textbf{Rubric Items}\\
\texttt{\{rubric\_items\}}

\vspace{6pt}
\textbf{Student Solution}\\
\texttt{\{response\}}

\vspace{6pt}
\textbf{Student's Final Answer}\\
\texttt{\{extracted\_answer\}}

\vspace{6pt}
\textbf{Task}\\
For every rubric item listed above, evaluate the student's solution.

\textbf{Rules:}
\begin{itemize}[leftmargin=1.5em, itemsep=2pt, parsep=0pt, topsep=2pt]
    \item SUGGEST $\rightarrow$ \texttt{satisfied}: true if the student correctly performed that reasoning step.
    \item PITFALL $\rightarrow$ \texttt{satisfied}: true if the student made that mistake (fell into the pitfall).
    \item BONUS $\rightarrow$ \texttt{satisfied}: true if the student used that approach.
\end{itemize}

\noindent For \texttt{step}: the 1-indexed step number (\#\#\# Step N) most closely associated with this item. Use 0 if it spans the whole solution; use -1 if there is no relevant step.

\vspace{6pt}
\noindent Return ONLY a valid JSON array:\\
\texttt{[}\\
\hspace*{1.5em}\texttt{\{"id": $<$int$>$, "satisfied": $<$bool$>$, "step": $<$int$>$\},}\\
\hspace*{1.5em}\texttt{...}\\
\texttt{]}
\end{AIbox}

\subsection{Step-Level Judge Prompt}
\label{app:prompt_step_judge}

\begin{AIbox}{Step-Level Correctness Judge Prompt}
You are a math reasoning step verifier. Given a math problem and a sequence of reasoning steps, determine whether the CURRENT STEP is logically correct.

\vspace{6pt}
\textbf{Problem}\\
\texttt{\{question\}}

\vspace{6pt}
\textbf{Ground Truth Answer}\\
\texttt{\{answer\}}

\vspace{6pt}
\textbf{Previous Steps (assumed correct for this evaluation)}\\
\texttt{\{previous\_steps\}}

\vspace{6pt}
\textbf{Current Step to Evaluate}\\
\texttt{\{current\_step\_title\}}\\
\texttt{\{current\_step\_content\}}

\vspace{6pt}
\textbf{Instructions}\\
Analyze whether the current step contains correct reasoning, calculations, and logic. Consider:
\begin{enumerate}[leftmargin=1.5em, itemsep=2pt, parsep=0pt, topsep=2pt]
    \item Are the mathematical operations correct?
    \item Is the logic/deduction valid?
    \item Are there any errors in the step?
\end{enumerate}

\vspace{4pt}
\noindent Respond with EXACTLY one of:
\begin{itemize}[leftmargin=1.5em, itemsep=2pt, parsep=0pt, topsep=2pt]
    \item ``CORRECT'' if the step is logically and mathematically correct
    \item ``INCORRECT'' if the step contains any error
\end{itemize}
\end{AIbox}

\section{Training Hyperparameters}
\label{app:hparams}

\begin{table}[h]
\centering
\caption{Training Hyperparameters.}
\label{tab:hparams}
\begin{tabular}{lc}
\toprule
\textbf{Hyperparameter} & \textbf{Value} \\
\midrule
Max Prompt Length & 2048 \\
Max Response Length & 8192 \\
Train Batch Size & 128 \\
Learning Rate & $1 \times 10^{-6}$ \\
Rollout Group Size ($n$) & 8 \\
Training Steps & 200 \\
Temperature & 1.0 \\
\bottomrule
\end{tabular}
\end{table}

\section{Training Dynamics}
\label{app:training_dynamics}

\begin{figure}[h]
    \centering
    \includegraphics[width=\textwidth]{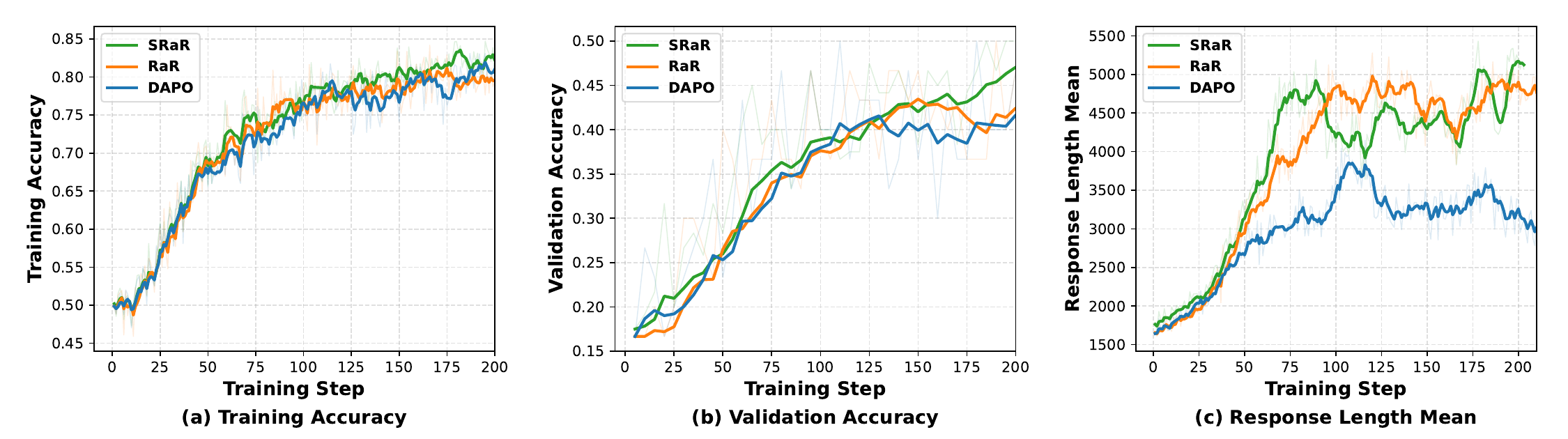}
    \caption{Training dynamics of DAPO, RaR, and SRaR on Qwen3-32B-Non-Thinking. \textbf{(a)} Training reward accuracy. \textbf{(b)} Validation accuracy. \textbf{(c)} Mean response length over training.}
    \label{fig:training_dynamics}
\end{figure}

Figure~\ref{fig:training_dynamics} illustrates the training dynamics of different methods on Qwen3-32B-Non-Thinking.
As shown in Figure~\ref{fig:training_dynamics}(a), all three methods converge to similar training reward accuracy (${\sim}0.80$--$0.83$), with SRaR reaching marginally higher values, indicating that step-level credit assignment provides effective gradient signals even at larger scale.
Figure~\ref{fig:training_dynamics}(b) reports validation accuracy. SRaR consistently achieves the highest validation accuracy throughout training, reaching ${\sim}0.47$ at convergence compared to RaR (${\sim}0.43$) and DAPO (${\sim}0.42$), confirming that the training gains translate to held-out performance.
Meanwhile, Figure~\ref{fig:training_dynamics}(c) shows that both RaR and SRaR produce longer responses than DAPO, with similar length trajectories. Despite comparable response lengths, SRaR achieves substantially higher accuracy (Figure~\ref{fig:training_dynamics}(b)), indicating that its gains stem from more effective reasoning per token rather than simply generating more content. The per-aspect rubric score dynamics (Suggest, Bonus, and Pitfall) are provided in Figure~\ref{fig:rubric_scores}.

\section{Format Compliance During Training}
\label{app:format_compliance}

\begin{figure}[H]
    \centering
    \includegraphics[width=\linewidth]{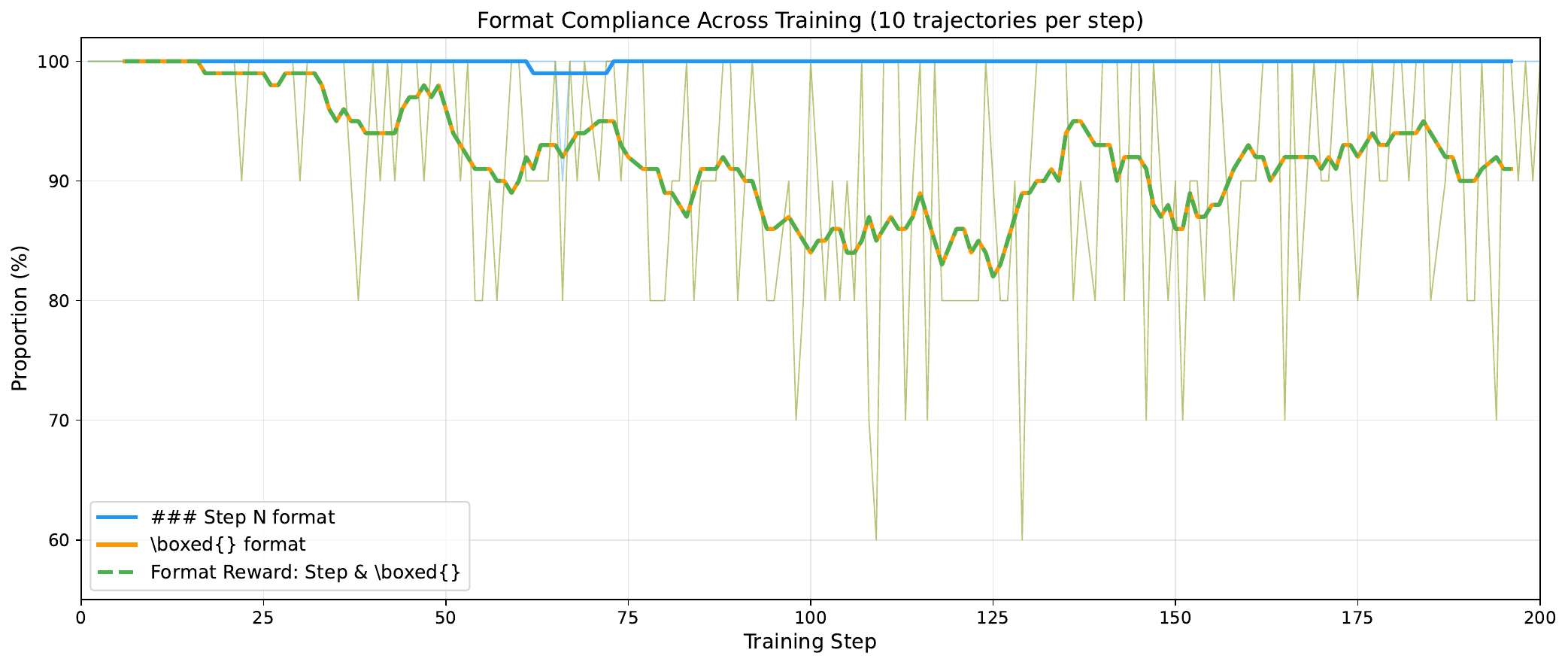}
    \caption{Format compliance across training steps for Qwen3-8B-Non-Thinking. At each training step, 10 rollouts are sampled and checked for (1)~\texttt{\#\#\# Step N} headers, (2)~\texttt{\textbackslash boxed\{\}} answer, and (3)~the conjunction of both (i.e., the format reward). The step-header format is maintained at near-100\% throughout training. The occasional drop in \texttt{\textbackslash boxed\{\}} compliance is attributable to response truncation at the maximum generation length, which prevents the model from emitting a final boxed answer.}
    \label{fig:format_compliance}
\end{figure}

\section{Rubric Quality Evaluation}
\label{app:rubric_quality}

To assess the quality of rubrics produced by our pipeline, we conducted an automated validity evaluation on 51,583 rubric items spanning $\sim$8,000 problems, using Gemini-3-Flash as the LLM judge. Each item was independently judged as valid or invalid with a brief justification.
\textbf{Overall validity.}
After this treatment, \textbf{98.0\%} of evaluated rubric items are judged valid, with only 1,027 items (2.0\%) flagged as invalid.
\textbf{Validity by rubric type.}
Table~\ref{tab:rubric_quality} shows validity rates broken down by rubric aspect. All three types achieve high and consistent validity rates in the range 96.3\%--98.5\%, confirming that the pipeline generates reliable rubrics across all aspects.

\begin{table}[h]
\centering
\caption{Rubric validity rates by type, evaluated over 51,583 items from $\sim$8,000 problems.}
\label{tab:rubric_quality}
\vspace{0.4em}
\small
\begin{tabular}{l c c c}
\toprule
\textbf{Rubric Type} & \textbf{Total} & \textbf{Valid} & \textbf{Validity Rate} \\
\midrule
\textsc{Suggest} & 36,532 & 35,994 & 98.5\% \\
\textsc{Pitfall} &  9,822 &  9,458 & 96.3\% \\
\textsc{Bonus}   &  5,213 &  5,089 & 97.6\% \\
\midrule
\textbf{Overall} & \textbf{51,583} & \textbf{50,556} & \textbf{98.0\%} \\
\bottomrule
\end{tabular}
\end{table}
\textbf{Human--AI agreement.}
Two of the authors independently annotated a 100-item cross-check sample using the same prompt below, achieving 92\% agreement with the LLM judge.

\begin{AIbox}{Rubric Validity Judge Prompt}
\small
You are a mathematics expert evaluating the quality of a rubric item for a math problem.

\noindent\textbf{Math Problem}

\noindent\texttt{\{problem\}}

\noindent\textbf{Ground Truth Answer}

\noindent\texttt{\{ground\_truth\}}

\noindent\textbf{Rubric Item}

\noindent Type: \texttt{\{rubric\_type\}}\\
Content: \texttt{\{rubric\_text\}}

\noindent\textbf{Rubric Type Definitions}
\begin{itemize}[leftmargin=1.5em, itemsep=2pt, parsep=0pt, topsep=4pt]
  \item \textbf{SUGGEST}: A recommended reasoning step or approach for solving the problem.
  \item \textbf{PITFALL}: A common mistake or error that students might make.
  \item \textbf{BONUS}: An alternative valid approach or insight.
  \item \textbf{ANSWER}: The expected final answer.
\end{itemize}

\noindent\textbf{Your Task}

\noindent Judge whether this rubric item is \textbf{valid} for the given math problem:
\begin{enumerate}[leftmargin=1.5em, itemsep=2pt, parsep=0pt, topsep=2pt]
  \item Is the content mathematically correct?
  \item Is it relevant to this specific problem?
  \item Is the type label (SUGGEST/PITFALL/BONUS/ANSWER) appropriate?
\end{enumerate}

\noindent Respond with ONLY a JSON object (no markdown, no extra text):\\
\texttt{\{"valid": true or false, "reason": "brief explanation"\}}
\end{AIbox}

\begin{figure}[h]
    \centering
    \includegraphics[width=\textwidth]{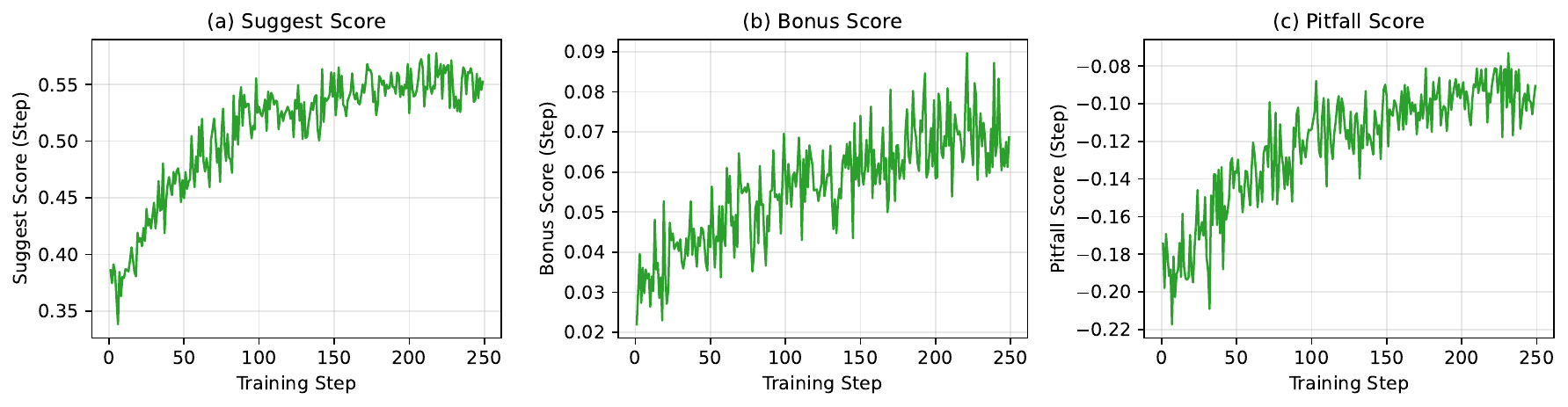}
    \caption{Step-level rubric score dynamics of SRaR during training on Qwen3-8B-Non-Thinking. \textbf{(a)} Suggest score steadily increases, indicating the model increasingly follows recommended reasoning steps. \textbf{(b)} Bonus score rises as the model discovers alternative solution strategies. \textbf{(c)} Pitfall score increases (toward zero), showing the model progressively avoids common errors.}
    \label{fig:rubric_scores}
\end{figure}

\section{LLM Judge Agreement Analysis}
\label{app:judge_agreement}

A core component of SRaR is the LLM judge that evaluates each rubric item against the model's response and attributes it to a specific reasoning step. To validate the reliability of this judge, we conduct an agreement study on 100 training trajectories sampled uniformly across training steps 20--200 (10 trajectories per checkpoint, totaling 664 rubric items).

\subsection*{Setup}
We use GPT-OSS-20B as the LLM judge (identical to the judge used during training) and compare its outputs against independent human annotations by two of the authors on the same 664 rubric items. For each item, both the LLM judge and the human annotators produce a binary satisfied judgment and a step attribution (the step number most closely associated with the rubric item).

\subsection*{Agreement on Rubric Satisfaction}

Table~\ref{tab:judge_agreement} reports the agreement between the two independent evaluations.

\begin{table}[h]
\centering
\caption{LLM judge agreement on 664 rubric items from 100 training trajectories.}
\label{tab:judge_agreement}
\vspace{0.4em}
\small
\begin{tabular}{l c}
\toprule
\textbf{Metric} & \textbf{Value} \\
\midrule
Satisfied agreement rate & 89.3\% \\
Satisfied Cohen's $\kappa$ & 0.76 \\
Step attribution agreement & 80.7\% \\
\bottomrule
\end{tabular}
\end{table}

The two evaluations agree on 89.3\% of rubric items with a Cohen's $\kappa$ of 0.76, corresponding to substantial agreement on the Landis--Koch scale. For step attribution, when both evaluations assign a specific step number, they agree 80.7\% of the time.

\subsection*{Rubric Scores and Answer Correctness}

We further analyze the relationship between rubric satisfaction rates and final answer correctness across the 100 trajectories (Table~\ref{tab:rubric_correctness}).

\begin{table}[h]
\centering
\caption{Rubric satisfaction rates for correct vs.\ incorrect trajectories (100 trajectories, 664 rubric items). Point-biserial correlations are between binary correctness and satisfaction rate.}
\label{tab:rubric_correctness}
\vspace{0.4em}
\small
\begin{tabular}{l c c c c c}
\toprule
\textbf{Rubric Type} & \textbf{Correct} & \textbf{Incorrect} & \textbf{Diff.} & \textbf{$r_{pb}$} & \textbf{$p$-value} \\
\midrule
\textsc{Suggest} satisfied & 74.2\% & 24.7\% & $+$49.5 pp & 0.62 & $<$0.001 \\
\textsc{Pitfall} triggered & 2.5\% & 24.2\% & $-$21.7 pp & $-$0.31 & 0.002 \\
\textsc{Bonus} satisfied & 12.5\% & 0.0\% & $+$12.5 pp & 0.28 & 0.005 \\
\bottomrule
\end{tabular}
\end{table}

All three rubric aspects are significantly correlated with answer correctness ($p < 0.01$). Correct trajectories satisfy 74.2\% of \textsc{Suggest} items (vs.\ 24.7\% for incorrect ones), trigger only 2.5\% of \textsc{Pitfall} items (vs.\ 24.2\%), and satisfy 12.5\% of \textsc{Bonus} items (vs.\ 0\%). This confirms that the rubric scores produced by the LLM judge are meaningful indicators of reasoning quality and are well-aligned with outcome correctness.

\section{Judge Model Robustness}
\label{app:judge_robustness}

To validate that SRaR is not dependent on a specific LLM judge, we replace the default GPT-OSS-20B judge with an alternative open-source model, Qwen3-30B-A3B-Instruct, while keeping all other training hyperparameters identical. Table~\ref{tab:judge_robustness} reports the results on Qwen3-8B-Non-Thinking.

\begin{table}[h]
\centering
\caption{SRaR performance on Qwen3-8B-Non-Thinking with different LLM judges. Both judges produce improvements over the RaR baseline (68.96 avg.), confirming that SRaR's gains are robust to the choice of judge model.}
\label{tab:judge_robustness}
\vspace{0.4em}
\resizebox{\textwidth}{!}{%
\begin{tabular}{l ccc cccc}
\toprule
\textbf{Judge Model} & \textbf{AIME24} & \textbf{AIME25} & \textbf{AMC23} & \textbf{MATH500} & \textbf{Minerva} & \textbf{Olympiad} & \textbf{Avg.} \\
\midrule
RaR (no step-level) & 57.50 & 43.23 & 88.20 & 92.6 & 64.71 & 67.51 & 68.96 \\
\midrule
SRaR w/ GPT-OSS-20B & \textbf{61.15} & \textbf{49.06} & \textbf{92.73} & \textbf{94.4} & 64.71 & \textbf{73.15} & \textbf{72.53} \\
SRaR w/ Qwen3-30B-A3B & 57.81 & 46.04 & 91.25 & 93.20 & \textbf{66.54} & 70.33 & 70.86 \\
\bottomrule
\end{tabular}%
}
\end{table}

With Qwen3-30B-A3B-Instruct as the judge, SRaR still achieves a $+1.90$ average improvement over RaR (70.86 vs.\ 68.96), demonstrating that the step-wise reward framework is effective with a different judge model. The gap relative to the GPT-OSS-20B judge ($-1.67$ avg.) is modest, suggesting that judge choice primarily affects the magnitude of improvement rather than the viability of the approach. This is consistent with the expectation that different judges produce varying step attributions, which are partially mitigated by cross-rollout normalization (noise that is uncorrelated across rollouts averages out during normalization).

\section{Rubric Examples}
\label{app:rubric_examples}

We present two representative examples from our curated training set to illustrate the structure and quality of the generated rubrics. Each problem is accompanied by a rubric list consisting of \textsc{Suggest}, \textsc{Pitfall}, \textsc{Bonus}, and \textsc{Answer} items produced by our three-stage pipeline.

\subsection*{Example 1}

\textbf{Problem.} Given that nonzero real numbers $x$ and $y$ satisfy $x + \frac{1}{y} = 3$ and $y + \frac{1}{x} = 4$, find the value of $xy + \frac{1}{xy}$.

\textbf{Ground Truth.} 10

\begin{RubricCaseBox}{Generated Rubric}
\small
\texttt{<SUGGEST>} Multiplies the two equations to obtain $\left(x+\tfrac{1}{y}\right)\!\left(y+\tfrac{1}{x}\right)=12$.

\texttt{<SUGGEST>} Expands the product correctly to $xy + 2 + \tfrac{1}{xy}$.

\texttt{<SUGGEST>} From $xy + 2 + \tfrac{1}{xy} = 12$, isolates and states $xy + \tfrac{1}{xy} = 10$.

\texttt{<PITFALL>} Misexpanding $\left(x+\tfrac{1}{y}\right)\!\left(y+\tfrac{1}{x}\right)$, especially omitting the $+2$ term from $x\cdot\tfrac{1}{x}$ and $\tfrac{1}{y}\cdot y$.

\texttt{<BONUS>} Computes $xy + \tfrac{1}{xy}$ directly via the product expansion without solving for $x$ or $y$ individually.

\texttt{<ANSWER>} Final value reported for $xy + \tfrac{1}{xy}$ equals $10$.
\end{RubricCaseBox}

\subsection*{Example 2}

\textbf{Problem.} Let $S$ be the set of triples $(a,b,c)$ of non-negative integers such that $a+b+c$ is even. The value of $\sum_{(a,b,c)\in S} \frac{1}{2^a 3^b 5^c}$ can be expressed as $\frac{m}{n}$ where $\gcd(m,n)=1$. Compute $m+n$.

\textbf{Ground Truth.} 37

\begin{RubricCaseBox}{Generated Rubric}
\small
\texttt{<SUGGEST>} Applies the parity indicator $\frac{1+(-1)^{a+b+c}}{2}$ to rewrite the restricted sum as $\frac{S_0+S_1}{2}$, where $S_0$ is the unrestricted sum and $S_1$ includes the factor $(-1)^{a+b+c}$.

\texttt{<SUGGEST>} Factors $S_0$ into $\sum \tfrac{1}{2^a}\cdot\sum\tfrac{1}{3^b}\cdot\sum\tfrac{1}{5^c}$ and evaluates to $2\cdot\tfrac{3}{2}\cdot\tfrac{5}{4}=\tfrac{15}{4}$.

\texttt{<SUGGEST>} Factors $S_1$ into $\sum\tfrac{(-1)^a}{2^a}\cdot\sum\tfrac{(-1)^b}{3^b}\cdot\sum\tfrac{(-1)^c}{5^c}$ and evaluates to $\tfrac{2}{3}\cdot\tfrac{3}{4}\cdot\tfrac{5}{6}=\tfrac{5}{12}$.

\texttt{<SUGGEST>} Combines via $\frac{S_0+S_1}{2}=\frac{15/4+5/12}{2}=\frac{25}{12}$, giving $m+n=37$.

\texttt{<PITFALL>} Omits the $\tfrac{1}{2}$ factor in the parity projection, or uses $\frac{S_0-S_1}{2}$ (which sums over odd $a+b+c$) instead of $\frac{S_0+S_1}{2}$.

\texttt{<BONUS>} Uses the compact form $\frac{1}{2}\!\left(\frac{1}{(1-x)(1-y)(1-z)}+\frac{1}{(1+x)(1+y)(1+z)}\right)$ with $x=\tfrac{1}{2},y=\tfrac{1}{3},z=\tfrac{1}{5}$.

\texttt{<ANSWER>} Final reported $m+n$ equals $37$.
\end{RubricCaseBox}

\section{Case Studies}
\label{app:case_studies}

We provide representative case studies from AIME~2025 (repeat\_id=0) to illustrate the qualitative differences between RaR and SRaR identified in our analysis.

\subsection{Process--Outcome Alignment Case Study}
\label{app:case_process_outcome}

We present two problems where RaR produces incorrect answers with step-level errors, while SRaR produces correct answers with all steps judged correct.
\textbf{Case 1: AIME~2025 Problem 20.}
Circle $\omega_1$ with radius 6 centered at point $A$ is internally tangent at point $B$ to circle $\omega_2$ with radius 15\ldots The area of rectangle $EFGH$ is $\frac{m}{n}$\ldots Find $m+n$. (Ground truth: 293.)

\begin{CaseStudyBox}{{RaR Response (Problem 20) --- Incorrect, Looping}}
\small
\texttt{\#\#\# Step 1: Set up coordinates}

We can place the center of $\omega_2$ at the origin\ldots Since the circles are internally tangent at point $B$\ldots the center $A$ of $\omega_1$ lies along the line connecting the centers\ldots

\vspace{4pt}
[The model enters an extended self-revision loop with 34 revision phrases across 30{,}922 characters. It repeatedly reconsiders the geometric constraints without converging. The response is cut off mid-computation without producing a \texttt{\textbackslash boxed\{\}} answer.]

\vspace{4pt}
\texttt{...So the inequality becomes: 33 - p < p + 15\ldots 18}

[Response truncated --- no final answer produced.]
\end{CaseStudyBox}

\begin{CaseStudyBox}{{SRaR Response (Problem 20) --- Correct, All Steps Correct}}
\small
\texttt{\#\#\# Step 1: Understand the configuration and setup coordinates}

We are given two circles, $\omega_1$ and $\omega_2$\ldots Let's place the center of $\omega_2$ at the origin. Then $A = (9, 0)$, $O = (0, 0)$, $B = (15, 0)$.

\vspace{4pt}
\texttt{\#\#\# Step 2: Find key points}

Since $BC$ is a diameter of $\omega_2$, $C = (-15, 0)$. Since $BC \perp AD$, $D$ lies directly above $A$\ldots $D = (9, 12)$.

\vspace{4pt}
\texttt{\#\#\# Step 3: Compute the rectangle and verify equal areas}

\ldots area of the rectangle is $288/5$, so $m = 288$, $n = 5$, $m + n = 293$.

\vspace{4pt}
\texttt{\#\#\# Step 4: Final Answer}

\texttt{\textbackslash boxed\{293\}}
\end{CaseStudyBox}

\noindent\textbf{Analysis.} RaR's trajectory-level reward fails to prevent the model from entering an extended self-revision loop: it repeatedly reconsiders the geometric constraints without converging, ultimately exhausting the generation budget across 30{,}922 characters. In contrast, SRaR's step-wise reward allows the model to commit to each step and progress efficiently through the solution in four clean steps with only 6{,}640 characters.
\textbf{Case 2: AIME~2025 Problem 24.}
Sixteen chairs are arranged in a row. Eight people each select a chair in which to sit so that no person sits next to two other people. Let $N$ be the number of subsets\ldots Find the remainder when $N$ is divided by 1000. (Ground truth: 907.)

\begin{CaseStudyBox}{{RaR Response (Problem 24) --- Incorrect, Unstructured Looping}}
\small
We are given 16 chairs arranged in a row\ldots Wait, actually the problem says ``no person sits next to two other people.'' Hmm, let me parse that carefully.

\vspace{4pt}
[The model never produces any \texttt{\#\#\# Step} headers. Instead, it generates an unstructured stream of reasoning with 63 self-correction phrases across 33{,}378 characters. It repeatedly restarts the dynamic programming computation without converging. The response ends mid-computation:]

\vspace{4pt}
\texttt{...k=6: dp[7][6] + dp[6][5] + dp[5][4] = 0 + 0 + 1 = 1}\\
\texttt{k=7: 0}\\
\texttt{k=8: 0}\\
\texttt{n=8:}\\
\texttt{k=0: 1}\\
\texttt{k=1: 8}

[Response truncated --- no final answer produced.]
\end{CaseStudyBox}

\begin{CaseStudyBox}{{SRaR Response (Problem 24) --- Correct, All Steps Correct}}
\small
\texttt{\#\#\# Step 1: Understand the problem}

We have 16 chairs arranged in a row. Eight people each select a chair\ldots The condition means no person is adjacent to two other people, i.e., no three consecutive chairs are occupied.

\vspace{4pt}
\texttt{\#\#\# Step 2: Count the valid subsets}

\ldots our calculation of 2907 should be correct. Thus the answer is 907.

\vspace{4pt}
\texttt{\#\#\# Step 3: Compute the final answer}

The remainder when $N$ is divided by 1000 is \texttt{\textbackslash boxed\{907\}}.
\end{CaseStudyBox}

\noindent\textbf{Analysis.} Both models initially grapple with interpreting the constraint, but RaR degenerates into an unstructured stream of 63 self-corrections without ever organizing its reasoning into steps, ultimately truncating without an answer. SRaR organizes the solution into three clear steps, correctly interprets the constraint, and arrives at the right answer.

\subsection{Looping Case Study}
\label{app:case_looping}

We present AIME~2025 Problem~13 as an illustrative example of the looping behavior analyzed in Section~\ref{sec:looping_analysis}.

Let $ABCDE$ be a convex pentagon with $AB=14$, $BC=7$, $CD=24$, $DE=13$, $EA=26$, and $\angle B=\angle E=60^{\circ}$. For each point $X$ in the plane, define $f(X)=AX+BX+CX+DX+EX$. The least possible value of $f(X)$ can be expressed as $m+n\sqrt{p}$\ldots Find $m+n+p$. (Ground truth: 60.)

\begin{CaseStudyBox}{{RaR Response (Problem 13) --- Severe Looping, No Answer}}
\small
\texttt{\#\#\# Step 1: Understanding the Problem}

The function $f(X)$ represents the sum of distances from a variable point $X$ to each of the five vertices\ldots This is the Fermat-Torricelli problem generalized to five points\ldots

\vspace{4pt}
[The model enters a revision loop with 31 self-correction phrases across 23{,}874 characters. It repeatedly restarts the coordinate geometry computation without converging. Representative excerpts:]

\vspace{4pt}
\texttt{...Wait, if DA is (a, b), then perpendicular vectors are (-b, a) and (b, -a)\ldots}\\
\texttt{...Wait, no: Wait, if DA is (a, b)\ldots}

\vspace{4pt}
[The response ends mid-sentence without producing a \texttt{\textbackslash boxed\{\}} answer.]
\end{CaseStudyBox}

\begin{CaseStudyBox}{{SRaR Response (Problem 13) --- Structured, Produces Answer}}
\small
\texttt{\#\#\# Step 1: Understand the problem and identify key elements}

We are given a convex pentagon $ABCDE$ with specific side lengths and angles at $B$ and $E$ each equal to $60^{\circ}$. We need to find the minimum of $f(X) = AX + BX + CX + DX + EX$\ldots

\vspace{4pt}
\texttt{\#\#\# Step 2: Apply geometric reasoning}

\ldots the total minimal value might be $7\sqrt{3} + 13\sqrt{3} = 20\sqrt{3}$ plus other terms\ldots

[The model works through the problem with self-correction (11 phrases), but stays structured and converges to an answer.]

\vspace{4pt}
\texttt{\textbackslash boxed\{58\}} (incorrect, but a complete structured response)
\end{CaseStudyBox}

\noindent\textbf{Analysis.} This problem is difficult for both models (neither answers correctly), but the behavioral contrast is stark. RaR produces a response with 31 self-correction phrases, repeatedly restarting its coordinate computation and ultimately failing to produce any answer across 23{,}874 characters. SRaR, guided by step-wise rewards, maintains a clear two-step structure and converges to a definite (though incorrect) answer in 15{,}483 characters. This illustrates how step-wise rewards act as a structural prior that prevents degeneration into aimless revision, even on problems the model cannot ultimately solve.

\section{Data-Size Sensitivity}
\label{app:data_size_sensitivity}

We analyze how model performance changes with the amount of synthetic data. Instead of varying task
types, we vary only the number of synthetic rubrics training samples. Concretely, we train for exactly one epoch
on subsets of size 1k, 2k, 4k, 8k, and 16k. All subsets are obtained by random
sampling with a fixed seed, so improvements mainly reflect
data amount rather than selection artifacts.

\begin{figure}[h]
    \centering
    \includegraphics[width=0.75\linewidth]{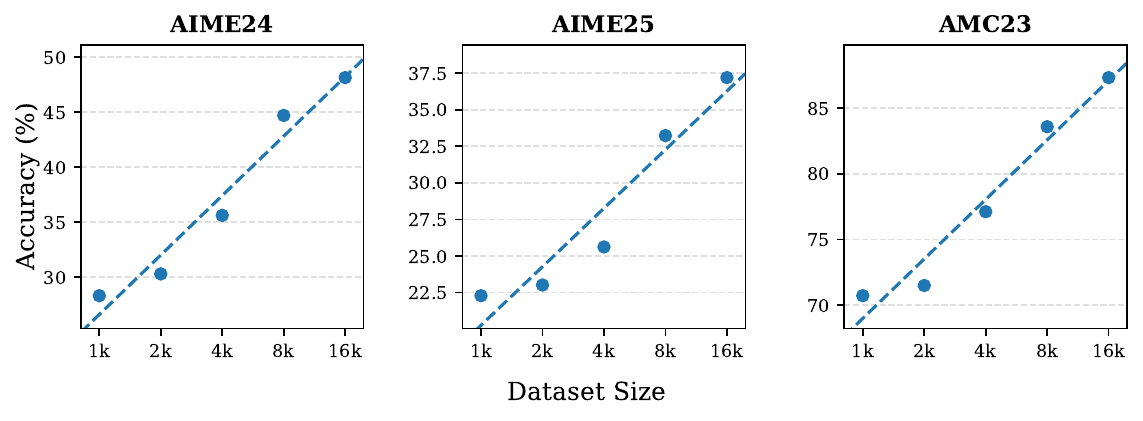}
    \caption{\textbf{Data-Size Effects of Synthetic Rubrics.} We report performance trends on AIME~2024, AIME~2025, and AMC~2023 as the amount of synthetic rubrics training data increases. ``1k/2k/4k/8k/16k'' are random subsets sampled with a fixed seed. The training recipe matches the main experimental setting. Results show a consistent upward trend across all benchmarks as synthetic data increases.}
    \label{fig:data_size_sensitivity}
\end{figure}

Figure~\ref{fig:data_size_sensitivity} reports accuracy across three benchmarks as data size scales from 1k to 16k.
Several observations stand out.
First, performance improves monotonically on every benchmark: even the smallest 1k subset already yields meaningful gains, while each doubling of data continues to push accuracy higher, confirming that our synthetic rubrics pipeline produces consistently useful training signal.
Second, the steepest improvement occurs between 4k and 8k samples. For instance, AIME25 jumps from 25.62\% to 33.23\%, and AMC23 rises from 77.11\% to 83.59\%, suggesting a critical mass of rubric diversity is reached in this regime.
Third, at the full 16k scale, the curves have not yet plateaued. For example, AIME24 reaches 48.12\%, AIME25 reaches 37.19\%, and AMC23 reaches 87.34\%, indicating that further scaling the synthetic data is likely to yield additional gains.
Finally, the trend is consistent across benchmarks of different difficulty levels, from the easier AMC23 (70.70\%$\to$87.34\%) to the harder AIME25 (22.29\%$\to$37.19\%), demonstrating that the benefit of more synthetic rubrics is not confined to a narrow difficulty band.

\section{Rubric Aspect Ablation}
\label{app:rubric_ablation}

We ablate the rubric aspects to disentangle the contribution of step-wise credit assignment from the rubric design. Table~\ref{tab:rubric_ablation} compares RaR and SRaR under two rubric configurations on Qwen3-8B-Non-Thinking: Suggest-only (removing PITFALL and BONUS) and the full multi-aspect set.

\begin{table}[h]
\centering
\caption{Rubric aspect ablation on Qwen3-8B-Non-Thinking. $\Delta$ denotes SRaR improvement over RaR within the same rubric configuration. Best results per configuration are \textbf{bolded}.}
\label{tab:rubric_ablation}
\vspace{0.5em}
\resizebox{\textwidth}{!}{%
\begin{tabular}{l ccc cccc}
\toprule
\textbf{Method} & \textbf{AIME24} & \textbf{AIME25} & \textbf{AMC23} & \textbf{MATH500} & \textbf{Minerva} & \textbf{Olympiad} & \textbf{Avg.} \\
\midrule
\rowcolor{gray!15}
\multicolumn{8}{l}{\textit{SUGGEST-only rubrics}} \\
RaR & 53.23 & 41.98 & \textbf{91.64} & 93.2 & 61.40 & 69.73 & 68.53 \\
\textbf{SRaR} & \textbf{58.65} & \textbf{46.98} & 91.56 & \textbf{93.6} & \textbf{66.18} & \textbf{71.36} & \textbf{71.39} \\
$\Delta$ & \textcolor{green!50!black}{+5.42} & \textcolor{green!50!black}{+5.00} & \textcolor{red}{--0.08} & \textcolor{green!50!black}{+0.40} & \textcolor{green!50!black}{+4.78} & \textcolor{green!50!black}{+1.63} & \textcolor{green!50!black}{+2.86} \\
\midrule
\rowcolor{gray!15}
\multicolumn{8}{l}{\textit{Full rubrics (SUGGEST + PITFALL + BONUS)}} \\
RaR & 57.50 & 43.23 & 88.20 & 92.6 & \textbf{64.71} & 67.51 & 68.96 \\
\textbf{SRaR} & \textbf{61.15} & \textbf{49.06} & \textbf{92.73} & \textbf{94.4} & \textbf{64.71} & \textbf{73.15} & \textbf{72.53} \\
$\Delta$ & \textcolor{green!50!black}{+3.65} & \textcolor{green!50!black}{+5.83} & \textcolor{green!50!black}{+4.53} & \textcolor{green!50!black}{+1.80} & +0.00 & \textcolor{green!50!black}{+5.64} & \textcolor{green!50!black}{+3.57} \\
\bottomrule
\end{tabular}%
}
\end{table}

Two conclusions emerge. First, SRaR outperforms RaR under both rubric configurations (+2.86 with Suggest-only, +3.57 with full rubrics), confirming that the primary source of improvement is step-wise credit assignment rather than the specific rubric aspects. Second, upgrading from Suggest-only to full rubrics improves SRaR from 71.39 to 72.53 (+1.14), indicating that PITFALL and BONUS provide complementary supervisory signal that further benefits step-wise training.

\section{Method Design Ablation}
\label{app:component_ablation}

We ablate three key design choices of SRaR on Qwen3-8B-Non-Thinking to isolate their individual contributions. Table~\ref{tab:component_ablation} reports results across all six benchmarks.

\begin{table}[h]
\centering
\caption{Component ablation on Qwen3-8B-Non-Thinking. Each variant removes one component from the full SRaR system. Best results are \textbf{bolded}.}
\label{tab:component_ablation}
\vspace{0.5em}
\resizebox{\textwidth}{!}{%
\begin{tabular}{l ccc cccc}
\toprule
\textbf{Method} & \textbf{AIME24} & \textbf{AIME25} & \textbf{AMC23} & \textbf{MATH500} & \textbf{Minerva} & \textbf{Olympiad} & \textbf{Avg.} \\
\midrule
\rowcolor{gray!15}
Base (Qwen3-8B, w/ step format) & 26.77 & 22.40 & 71.88 & 85.0 & 55.51 & 56.53 & 53.02 \\
DAPO (w/o step format) & 55.73 & 41.25 & 87.66 & 92.0 & 61.76 & 68.69 & 67.85 \\
SRaR (w/o step norm.) & 45.10 & 28.96 & 73.52 & 87.2 & 49.26 & 56.53 & 56.76 \\
SRaR (w/o decoupled adv.) & \textbf{62.60} & 46.67 & \textbf{93.28} & \textbf{94.6} & \textbf{64.71} & 70.47 & 72.06 \\
\rowcolor{lightred}
\textbf{SRaR (full)} & 61.15 & \textbf{49.06} & 92.73 & 94.4 & \textbf{64.71} & \textbf{73.15} & \textbf{72.53} \\
\bottomrule
\end{tabular}%
}
\end{table}
\textbf{Step-structured format prompt.}
Comparing DAPO trained without step format constraints (67.85 avg) against the full DAPO baseline in Table~\ref{tab:main_results_8b} (67.26 avg), the two are virtually identical, indicating that requiring the model to organize its chain-of-thought into explicit \texttt{\#\#\# Step} blocks is a benign structural constraint that introduces no measurable performance penalty. This makes it a practical prerequisite for SRaR, which relies on step boundaries to attribute rubric items to individual reasoning steps.
\textbf{Cross-rollout step normalization.}
Removing step normalization causes severe training collapse (56.76 avg), performing even worse than the base model on several benchmarks. Without normalization, rubric items that are uniformly satisfied across all rollouts still produce large positive advantages. The model exploits this by collapsing all reasoning into a single \texttt{\#\#\# Step 1}, maximizing rubric reward without distributing reasoning across multiple steps. We provide a representative example in Appendix~\ref{app:case_collapse}. Cross-rollout normalization prevents this by zeroing out the advantage for rubric items whose scores do not vary across rollouts, ensuring that only genuinely discriminative rubric items drive learning.
\textbf{Decoupled advantage estimation.}
Instead of maintaining separate outcome and rubric advantage streams, this variant computes a single RaR-style scalar reward (combining outcome correctness and rubric satisfaction) and then allocates it to the tokens of steps that received rubric scores. This approach yields competitive performance (72.06 avg) but falls short of the full SRaR (72.53 avg). The gap is most pronounced on harder benchmarks (AIME~2025: 46.67 vs.\ 49.06), suggesting that merging outcome and rubric signals into one scalar before allocation loses the benefit of keeping the outcome-driven baseline stable.

\section{Case Study: Training Collapse without Step Normalization}
\label{app:case_collapse}

Below is a representative response from the SRaR (w/o step normalization) model after training collapse. The model places its entire chain-of-thought (spanning over 4{,}000 tokens of deliberation) inside a single \texttt{\#\#\# Step~1}, never producing any subsequent step. This behavior emerges because, without cross-rollout normalization, rubric items that are consistently satisfied across all rollouts (e.g., ``identify the modular invariant'') still carry large positive advantages. By concentrating all reasoning in one step, the model ensures that this single step collects all rubric credit, removing any incentive to distribute reasoning across multiple steps.

\begin{FailCaseBox}{SRaR (w/o step norm.) -- Single-Step Collapse}
\small
\texttt{\#\#\# Step 1: Understand the problem and analyze the operations}

We are given a set of numbers from 1 to 2004 written on a desk. A move consists of selecting some numbers, calculating the remainder of their sum modulo 11, writing this remainder on the desk, and deleting the selected numbers. After several such moves, only two numbers remain on the desk, one of which is 1000. We are to find the other number.

First, let's understand the invariant\ldots\ Each move replaces some numbers with their sum modulo 11. So, if we consider the numbers on the desk as elements of the set, the key is that the sum of all numbers on the desk modulo 11 is preserved\ldots

\textit{[\ldots\ 4{,}000+ tokens of reasoning, all within Step 1, including repeated self-verification, alternative approaches, and redundant restatements\ldots]}

Therefore, the answer is 4.

\vspace{4pt}
\texttt{\#\#\# Final Answer}

\texttt{\textbackslash boxed\{4\}}
\end{FailCaseBox}

\noindent\textbf{Analysis.} The response contains a single reasoning step that spans the entire solution. While the final answer happens to be correct for this particular problem, the degenerate structure, including no step decomposition and excessive self-verification loops within one monolithic block, is a direct consequence of the un-normalized rubric reward. In contrast, the full SRaR model with cross-rollout normalization produces well-structured multi-step solutions, because normalization zeros out the advantage for rubric items that do not vary across rollouts, leaving only genuinely discriminative items to shape the policy gradient.

\section{Theoretical Properties of SRaR}
\label{app:theory}

We provide formal analyses of the key properties of SRaR. Throughout this section, we consider a single prompt $x$ with $n$ rollouts $\{y_1, \ldots, y_n\}$ sampled from the policy $\pi_\theta(\cdot | x)$. For clarity, we first state the zero-sum properties under the idealized setting $\epsilon = 0$ and then note that the practical implementation with $\epsilon > 0$ introduces negligible deviation (Remark~\ref{remark:eps}).

\subsection{Zero-Sum Properties}
\label{app:zerosum}

\begin{proposition}[Zero-Sum of GRPO Outcome Advantage]
\label{prop:grpo_zerosum}
Let $r_{\text{base},i}$ denote the base reward (accuracy + format) of rollout $i$, and define the GRPO outcome advantage as
\begin{equation}
    A_{\text{base},i} = \frac{r_{\text{base},i} - \mu_{\text{base}}}{\sigma_{\text{base}}},
    \quad
    \mu_{\text{base}} = \frac{1}{n}\sum_{i=1}^{n} r_{\text{base},i},
    \quad
    \sigma_{\text{base}}^2 = \frac{1}{n}\sum_{i=1}^{n}(r_{\text{base},i} - \mu_{\text{base}})^2.
\end{equation}
Then $\sum_{i=1}^{n} A_{\text{base},i} = 0$.
\end{proposition}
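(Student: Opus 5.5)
The plan is to pull the common denominator out of the sum and use the definition of the mean. Since $\sigma_{\text{base}}$ does not depend on the summation index $i$, linearity gives
\begin{equation*}
    \sum_{i=1}^{n} A_{\text{base},i} = \frac{1}{\sigma_{\text{base}}}\sum_{i=1}^{n}\bigl(r_{\text{base},i} - \mu_{\text{base}}\bigr) = \frac{1}{\sigma_{\text{base}}}\Bigl(\sum_{i=1}^{n} r_{\text{base},i} - n\,\mu_{\text{base}}\Bigr).
\end{equation*}
By definition, $n\,\mu_{\text{base}} = \sum_{i=1}^{n} r_{\text{base},i}$, so the bracket vanishes identically and the sum is $0$.

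The only point needing care is well-definedness, which I would settle first. The proposition uses the idealized $\epsilon = 0$ setting, so $A_{\text{base},i}$ is only defined when $\sigma_{\text{base}} > 0$, that is, when the base rewards are not all equal across the group. The statement should therefore be read under this standing assumption.

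For the degenerate case $\sigma_{\text{base}} = 0$, every $r_{\text{base},i} = \mu_{\text{base}}$. I would then note that the practical estimator (with $\epsilon>0$, or with the GRPO convention of zero advantage) assigns $A_{\text{base},i}=0$ to every rollout, so the sum is trivially $0$ there as well.

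For $\epsilon > 0$, the same computation goes through verbatim with denominator $\sigma_{\text{base}}+\epsilon$. The zero-sum identity is thus in fact exact for any $\epsilon$; only the scale of the advantages changes. This fact can be deferred to Remark~\ref{remark:eps}.
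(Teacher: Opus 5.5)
Your proof is correct and is essentially the paper's argument: factor out $1/\sigma_{\text{base}}$ by linearity and use $n\,\mu_{\text{base}} = \sum_{i} r_{\text{base},i}$. Your extra note that the idealized $\epsilon = 0$ statement implicitly requires $\sigma_{\text{base}} > 0$ is a useful clarification the paper leaves unstated, and your observation that the identity stays exact for $\epsilon > 0$ agrees with Remark~\ref{remark:eps}.
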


\begin{proof}
By linearity:
\begin{equation}
    \sum_{i=1}^{n} A_{\text{base},i}
    = \frac{1}{\sigma_{\text{base}}} \sum_{i=1}^{n} (r_{\text{base},i} - \mu_{\text{base}})
    = \frac{1}{\sigma_{\text{base}}} \left( \sum_{i=1}^{n} r_{\text{base},i} - n\mu_{\text{base}} \right)
    = \frac{1}{\sigma_{\text{base}}} \left( n\mu_{\text{base}} - n\mu_{\text{base}} \right)
    = 0. \qedhere
\end{equation}
\end{proof}

\begin{proposition}[Zero-Sum of Step-wise Normalized Rubric Signals]
\label{prop:step_zerosum}
For a given step number $k$, let $\mathcal{G}_k \subseteq \{1,\ldots,n\}$ be the set of rollout indices whose judge evaluation contains at least one rubric item attributed to step $k$, and let $d_{k,i}$ be the raw aggregate rubric delta for step $k$ in rollout $i \in \mathcal{G}_k$. Define the step-wise normalized signal as
\begin{equation}
    \bar{d}_{k,i} = \frac{d_{k,i} - \mu_k}{\sigma_k},
    \quad
    \mu_k = \frac{1}{|\mathcal{G}_k|}\sum_{i \in \mathcal{G}_k} d_{k,i},
    \quad
    \sigma_k^2 = \frac{1}{|\mathcal{G}_k|}\sum_{i \in \mathcal{G}_k} (d_{k,i} - \mu_k)^2.
\end{equation}
Then $\sum_{i \in \mathcal{G}_k} \bar{d}_{k,i} = 0$.
\end{proposition}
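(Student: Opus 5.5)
The argument is the one used for Proposition~\ref{prop:grpo_zerosum}, applied to the rollout group $\mathcal{G}_k$ instead of all $n$ rollouts. Fix the step index $k$ and abbreviate $m = |\mathcal{G}_k|$. The normalization in the statement uses only the group $\mathcal{G}_k$, and both $\mu_k$ and $\sigma_k$ are averages over exactly that index set. The sum $\sum_{i \in \mathcal{G}_k} \bar{d}_{k,i}$ therefore runs over the same set as the statistics, and the set-up reduces to a group of size $m$ with scalar values $d_{k,i}$.

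The first step is to handle the degenerate cases the method description singles out. If $m \le 1$, the method sets $\bar{d}_{k,i} = 0$, so the sum is trivially zero. When $m = 1$, $\sigma_k = 0$ and the formula in the statement is undefined; I will note that the convention, not the formula, governs this case. The same issue arises whenever $m \ge 2$ but all $d_{k,i}$ coincide, so that $\sigma_k = 0$. In the idealized $\epsilon = 0$ setting I will read $\bar{d}_{k,i}$ as $0$ there, since every numerator $d_{k,i} - \mu_k$ vanishes. With $\epsilon > 0$ the expression is exactly $0$ anyway. Either way the sum is zero, and I will refer this to Remark~\ref{remark:eps}.

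The main case is $\sigma_k > 0$. Pull the common factor $1/\sigma_k$ out of the sum:
\begin{equation*}
\sum_{i \in \mathcal{G}_k} \bar{d}_{k,i} = \frac{1}{\sigma_k}\Bigl(\sum_{i \in \mathcal{G}_k} d_{k,i} - m\,\mu_k\Bigr).
\end{equation*}
By the definition of $\mu_k$, $\sum_{i \in \mathcal{G}_k} d_{k,i} = m\,\mu_k$, so the bracket vanishes and the sum is $0$.

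There is no real obstacle here. The only points needing care are that the sum and the mean are taken over the identical index set $\mathcal{G}_k$, and that the zero-variance or singleton cases are covered by the stated convention. I would also remark that the identity holds for every $\epsilon \ge 0$ whenever $\sigma_k + \epsilon > 0$, because the denominator is common to all terms. This is worth stating since it gives exact zero-sum even in the practical implementation.
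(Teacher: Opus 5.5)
Your proposal is correct and follows the paper's proof: you pull out the common factor $1/\sigma_k$ and use $\sum_{i\in\mathcal{G}_k} d_{k,i} = |\mathcal{G}_k|\,\mu_k$ to make the numerator vanish. Your handling of the degenerate cases where $\sigma_k = 0$ ($|\mathcal{G}_k|\le 1$, or all $d_{k,i}$ equal) fills a point the paper's proof leaves implicit, and your remark that the identity holds for any $\epsilon>0$ matches Remark~\ref{remark:eps}.
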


\begin{proof}
\begin{equation}
    \sum_{i \in \mathcal{G}_k} \bar{d}_{k,i}
    = \frac{1}{\sigma_k} \sum_{i \in \mathcal{G}_k} (d_{k,i} - \mu_k)
    = \frac{1}{\sigma_k} \left( \sum_{i \in \mathcal{G}_k} d_{k,i} - |\mathcal{G}_k| \cdot \mu_k \right)
    = 0. \qedhere
\end{equation}
\end{proof}

\begin{corollary}[Zero-Sum Decomposition of SRaR Advantage]
\label{cor:full_zerosum}
The SRaR advantage $\hat{A}_i^{(t)} = A_{\text{base},i} + \tilde{r}_i^{(t)}$ decomposes into two independently zero-sum components across rollouts:
\begin{enumerate}[nosep]
    \item The outcome component: $\sum_{i=1}^{n} A_{\text{base},i} = 0$ (Proposition~\ref{prop:grpo_zerosum});
    \item The rubric component: for each step $k$, $\sum_{i \in \mathcal{G}_k} \bar{d}_{k,i} = 0$ (Proposition~\ref{prop:step_zerosum}).
\end{enumerate}
\end{corollary}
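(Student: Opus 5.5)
The corollary follows by combining the two preceding propositions; the only work is to match the definitions used in the SRaR advantage to those in Propositions~\ref{prop:grpo_zerosum} and~\ref{prop:step_zerosum}. I will first write the token-level advantage $\hat{A}_i^{(t)}$ as the sum of a term constant in $t$ and a term depending on $t$. The first term is $A_{\text{base},i}$, computed from $r_{\text{base},i} = (1-\lambda) r_{\text{acc},i} + \lambda r_{\text{fmt},i}$ with group statistics over all $n$ rollouts. The second term is $\tilde{r}_i^{(t)}$, which equals $\bar{d}_{k,i}$ whenever $t \in [a_k, b_k]$ and is $0$ on tokens outside any step span. The "decomposition" claim is just this additive split. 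Since the two pieces are built from disjoint data (base rewards versus judge verdicts) and separate group statistics, each can be treated on its own.

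For item 1, I take $\epsilon = 0$ as stated at the start of the section and invoke Proposition~\ref{prop:grpo_zerosum} directly, since $A_{\text{base},i}$ is exactly the quantity defined there.

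For item 2, I fix a step index $k$ and check that $\bar{d}_{k,i}$ is defined by the same group normalization over $\mathcal{G}_k$ as in Proposition~\ref{prop:step_zerosum}. Then I invoke that proposition. The rubric offset on any token of step $k$ in rollout $i$ equals $\bar{d}_{k,i}$, so summing the offsets of step $k$ across rollouts (one representative token per rollout) gives zero.

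The main obstacle is the degenerate cases, where the propositions as written divide by $\sigma$.
\begin{itemize}
\item If $|\mathcal{G}_k| \le 1$, the algorithm sets $\bar{d}_{k,i} = 0$, so the sum is trivially zero.
\item If $\sigma_k = 0$ with $|\mathcal{G}_k| \ge 2$, every $d_{k,i} - \mu_k$ vanishes, so the numerator sum is zero. I adopt the convention $0/0 := 0$, which matches the $\epsilon > 0$ implementation in the limit. The same convention handles $\sigma_{\text{base}} = 0$ in item 1.
\end{itemize}
I will note these conventions explicitly and defer the $\epsilon > 0$ discussion to Remark~\ref{remark:eps}. In that case every numerator sum is still exactly zero, so zero-sum actually holds exactly for any $\epsilon > 0$.
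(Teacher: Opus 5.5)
Your proposal is correct and follows the paper's proof. Both split $\hat{A}_i^{(t)}$ additively into the broadcast outcome term and the step-wise rubric offset, then apply Proposition~\ref{prop:grpo_zerosum} and Proposition~\ref{prop:step_zerosum} to each piece separately, using that the two are normalized independently from disjoint data. Your explicit treatment of the degenerate cases ($|\mathcal{G}_k|\le 1$, $\sigma_k=0$, $\sigma_{\text{base}}=0$) is a small addition the paper leaves implicit; with $\epsilon>0$ these cases give exactly $0$ for every $i$, so your $0/0:=0$ convention coincides with the implementation exactly, not just in the limit.
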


\begin{proof}
Follows directly from Propositions~\ref{prop:grpo_zerosum} and~\ref{prop:step_zerosum}. The outcome advantage is a scalar broadcast to all tokens, so it sums to zero over rollouts at every token position. The rubric reward at any token $t$ belonging to step $k$ in rollout $i$ equals $\bar{d}_{k,i}$, which sums to zero over $\mathcal{G}_k$ by Proposition~\ref{prop:step_zerosum}. The two components are computed from disjoint information (base outcome vs.\ rubric evaluations) and normalized independently, hence their zero-sum properties hold simultaneously.
\end{proof}

\begin{remark}[Effect of $\epsilon$]
\label{remark:eps}
In practice, a small constant $\epsilon > 0$ is added to the denominator for numerical stability. This introduces a bounded deviation:
\begin{equation}
    \left|\sum_{i \in \mathcal{G}_k} \bar{d}_{k,i}^{(\epsilon)}\right|
    = \left| \frac{\sum_{i}(d_{k,i} - \mu_k)}{\sigma_k + \epsilon} \right|
    = 0,
\end{equation}
since the numerator $\sum_i (d_{k,i} - \mu_k) = 0$ regardless of the denominator. Therefore, the zero-sum property holds exactly even with $\epsilon > 0$.
\end{remark}

\subsection{Bounded Reward Analysis}
\label{app:bounded}

\begin{proposition}[Bounded Per-Sample Rubric Reward]
\label{prop:bounded_reward}
For any single rollout $i$, the total raw rubric reward $\Delta_i = \sum_{k} d_{k,i}$ is bounded:
\begin{equation}
    R_{\texttt{PIT}} \leq \Delta_i \leq R_{\texttt{SUG}} + R_{\texttt{BON}}
\end{equation}
where $R_{\texttt{SUG}} > 0$, $R_{\texttt{PIT}} < 0$, and $R_{\texttt{BON}} > 0$ are the pre-defined total reward budgets for each rubric type.
\end{proposition}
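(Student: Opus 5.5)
We bound $\Delta_i$ by regrouping its terms by rubric type rather than by step. By definition $d_{k,i} = \sum_{j:\,k_j = k} \delta_j$, and each rubric item $j$ is attributed to exactly one step $k_j$. Summing over steps therefore just re-indexes a sum over rubric items:
\begin{equation}
\Delta_i = \sum_k d_{k,i} = \sum_{j \in \mathcal{S}_i} \delta_j ,
\end{equation}
where $\mathcal{S}_i \subseteq \{1,\ldots,M\}$ is the set of items whose attribution falls on an actual step of rollout $i$. In the idealized model of the judge $\mathcal{J}$, every $k_j \in \{1,\ldots,K\}$, so $\mathcal{S}_i$ is all items. I will not need this: the argument below works for any subset $\mathcal{S}_i$. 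That covers the practical case in which the judge returns step $0$ or $-1$ and such items are dropped or handled separately.

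Next, split $\mathcal{S}_i$ by type into $\mathcal{S}_i^{\texttt{SUG}}$, $\mathcal{S}_i^{\texttt{PIT}}$ and $\mathcal{S}_i^{\texttt{BON}}$. This gives $\Delta_i = \Sigma^{\texttt{SUG}} + \Sigma^{\texttt{PIT}} + \Sigma^{\texttt{BON}}$. By the definition of $\delta_j$, each term within one type group is either $0$ (item not satisfied) or a fixed constant of fixed sign:
\begin{itemize}
\item $R_{\texttt{SUG}}/N_{\texttt{SUG}} > 0$ for \texttt{SUGGEST} items;
\item $-|R_{\texttt{PIT}}|/N_{\texttt{PIT}} = R_{\texttt{PIT}}/N_{\texttt{PIT}} < 0$ for \texttt{PITFALL} items;
\item $R_{\texttt{BON}}/N_{\texttt{BON}} > 0$ for \texttt{BONUS} items.
\end{itemize}
The number of nonzero terms of type $\tau$ is at most $|\mathcal{S}_i^\tau| \le N_\tau$, since $N_\tau$ counts all items of that type. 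This yields the three per-type bounds
\begin{equation}
0 \le \Sigma^{\texttt{SUG}} \le R_{\texttt{SUG}}, \qquad R_{\texttt{PIT}} \le \Sigma^{\texttt{PIT}} \le 0, \qquad 0 \le \Sigma^{\texttt{BON}} \le R_{\texttt{BON}} .
\end{equation}
Adding the lower bounds gives $\Delta_i \ge R_{\texttt{PIT}}$. Adding the upper bounds gives $\Delta_i \le R_{\texttt{SUG}} + R_{\texttt{BON}}$, which is the claim.

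There is no real obstacle; the main thing to get right is bookkeeping. First, one must confirm that regrouping by step neither double-counts nor creates items. This holds because each $j$ has a single $k_j$. Second, the degenerate case $N_\tau = 0$ needs a convention: the type-$\tau$ sum is then empty and equals $0$, so the bounds still hold and no division by zero arises. Third, the same-sign structure within each type is what lets the bound survive restriction to an arbitrary subset $\mathcal{S}_i$. I would state this explicitly, since it makes the result robust to how unattributed items are treated.
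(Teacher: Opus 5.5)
Your proof is correct and is essentially the paper's argument: group the items by type, note that each satisfied item of type $\tau$ contributes the same fixed-sign amount ($+R_{\texttt{SUG}}/N_{\texttt{SUG}}$, $R_{\texttt{PIT}}/N_{\texttt{PIT}}$, or $+R_{\texttt{BON}}/N_{\texttt{BON}}$) with at most $N_\tau$ such items, and add the intervals $[0,R_{\texttt{SUG}}]$, $[R_{\texttt{PIT}},0]$, $[0,R_{\texttt{BON}}]$. The paper leaves three points implicit that you state explicitly: re-indexing $\sum_k d_{k,i}$ as a sum over items, allowing items the judge places at step $0$ or $-1$ to be dropped, and the convention for $N_\tau = 0$.
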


\begin{proof}
Consider each rubric type separately. Let $N_\tau$ denote the number of items of type $\tau$ in the current sample.

SUGGEST items. Each satisfied item contributes $+R_{\texttt{SUG}} / N_{\texttt{SUG}}$; unsatisfied items contribute 0. Since at most $N_{\texttt{SUG}}$ items can be satisfied, the total SUGGEST contribution lies in $[0, R_{\texttt{SUG}}]$.

PITFALL items. Each satisfied (i.e., the model fell into the pitfall) item contributes $-|R_{\texttt{PIT}}| / N_{\texttt{PIT}}$; unsatisfied items contribute 0. The total PITFALL contribution lies in $[R_{\texttt{PIT}}, 0]$.

BONUS items. By the same argument, the total BONUS contribution lies in $[0, R_{\texttt{BON}}]$.

Summing over all types: $\Delta_i \in [R_{\texttt{PIT}},\; R_{\texttt{SUG}} + R_{\texttt{BON}}]$.
\end{proof}

\begin{proposition}[Unit Variance of Normalized Step Signals]
\label{prop:unit_var}
For a given step $k$ with $|\mathcal{G}_k| \geq 2$ and $\sigma_k > 0$, the normalized deltas satisfy:
\begin{equation}
    \frac{1}{|\mathcal{G}_k|} \sum_{i \in \mathcal{G}_k} \bar{d}_{k,i}^2 = 1.
\end{equation}
\end{proposition}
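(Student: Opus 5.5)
The plan is a direct computation from the definitions in Proposition~\ref{prop:step_zerosum}, in the idealized setting $\epsilon = 0$ fixed at the start of this section. Write $G = |\mathcal{G}_k|$.

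First, substitute $\bar{d}_{k,i} = (d_{k,i} - \mu_k)/\sigma_k$ into the left-hand side. Since $\sigma_k$ does not depend on $i$, it factors out of the sum:
\begin{equation}
    \frac{1}{G}\sum_{i \in \mathcal{G}_k} \bar{d}_{k,i}^2
    = \frac{1}{\sigma_k^2}\cdot \frac{1}{G}\sum_{i \in \mathcal{G}_k} (d_{k,i}-\mu_k)^2 .
\end{equation}

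Second, the remaining average is exactly the definition of $\sigma_k^2$, computed with the population normalization $1/G$ over the same index set $\mathcal{G}_k$. The expression therefore equals $\sigma_k^2/\sigma_k^2 = 1$. Division is legitimate because of the hypothesis $\sigma_k > 0$. The hypothesis $G \ge 2$ is needed only because the normalization is defined to be zero when $G \le 1$; in any case $\sigma_k > 0$ already forces $G \ge 2$.

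There is no real obstacle. The only points to check are that the variance is computed with $1/G$ rather than $1/(G-1)$, and over $\mathcal{G}_k$ rather than all $n$ rollouts; both match the statement.

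I would add a remark in the style of Remark~\ref{remark:eps}. With $\epsilon > 0$ the identity becomes
\begin{equation}
    \frac{1}{G}\sum_{i \in \mathcal{G}_k} \bar{d}_{k,i}^2 = \frac{\sigma_k^2}{(\sigma_k+\epsilon)^2} \le 1 .
\end{equation}
So unit variance holds only approximately when $\epsilon > 0$, unlike the zero-sum property, which holds exactly. Combined with Proposition~\ref{prop:step_zerosum}, this gives $|\bar{d}_{k,i}| \le \sqrt{G-1}$: a value achieving this maximum would force all other deviations to share the opposite sign, and Cauchy--Schwarz yields the bound.
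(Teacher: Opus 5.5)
Your proof is correct and is the same direct computation as the paper's: substitute $\bar{d}_{k,i}=(d_{k,i}-\mu_k)/\sigma_k$, pull out $1/\sigma_k^2$, and recognize the remaining average as the population variance $\sigma_k^2$ over $\mathcal{G}_k$. Your side remarks are also sound: with $\epsilon>0$ the second moment becomes $\sigma_k^2/(\sigma_k+\epsilon)^2<1$, so unlike the zero-sum property this identity is only approximate in practice, and the bound $|\bar{d}_{k,i}|\le\sqrt{|\mathcal{G}_k|-1}$ follows from Cauchy--Schwarz.
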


\begin{proof}
\begin{equation}
    \frac{1}{|\mathcal{G}_k|} \sum_{i \in \mathcal{G}_k} \bar{d}_{k,i}^2
    = \frac{1}{|\mathcal{G}_k|} \sum_{i \in \mathcal{G}_k} \frac{(d_{k,i} - \mu_k)^2}{\sigma_k^2}
    = \frac{1}{\sigma_k^2} \cdot \frac{1}{|\mathcal{G}_k|} \sum_{i \in \mathcal{G}_k} (d_{k,i} - \mu_k)^2
    = \frac{\sigma_k^2}{\sigma_k^2}
    = 1. \qedhere
\end{equation}
\end{proof}

\subsection{Reduction to Standard GRPO}
\label{app:reduction}

\begin{proposition}[Equivalence to GRPO When Rubrics Are Empty]
\label{prop:reduction}
When the rubric list $\mathcal{R} = \emptyset$ for all training samples, the SRaR advantage estimator reduces to standard GRPO.
\end{proposition}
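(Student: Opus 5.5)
The argument traces the construction of $\hat{A}_i^{(t)}$ from the empty rubric forward and shows that each rubric-dependent quantity vanishes. First, with $\mathcal{R}_x=\emptyset$ the judge $\mathcal{J}(y_i,\mathcal{R}_x)$ returns an empty set of verdicts, because $M=0$. So no index $j$ exists with $k_j=k$, and every per-step raw reward is an empty sum, $d_{k,i}=\sum_{j:\,k_j=k}\delta_j=0$. The per-type normalizers $N_\tau$ are then zero, but they never enter the computation because no $\delta_j$ is formed. The convention for this case should be stated explicitly: a vacuous type contributes nothing.

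Next, the normalization step. By its definition, $\mathcal{G}_k$ is the set of rollouts whose judge evaluation contains at least one rubric item attributed to step $k$. With no items this set is empty for every $k$, so $|\mathcal{G}_k|=0\le 1$, and the convention in the method section gives $\bar{d}_{k,i}=0$. This sidesteps the otherwise delicate $0/(\sigma_k+\epsilon)$ issue. Even if one instead took $\mathcal{G}_k$ to be all rollouts containing step $k$, every $d_{k,i}$ would equal $0$, so $\mu_k=0$ and the numerator $d_{k,i}-\mu_k$ would vanish. That gives $\bar d_{k,i}=0$ for any $\epsilon>0$. With $\epsilon=0$ the expression is $0/0$, so it falls back on the $|\mathcal{G}_k|\le1$ convention, or equivalently the rule that zero-variance groups give zero signal, as in Remark~\ref{remark:eps}. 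I expect this degenerate-normalization bookkeeping to be the only real obstacle. I will handle it by invoking the stated convention rather than by any limit argument.

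Consequently, the broadcast offset is $\tilde r_i^{(t)}=0$ for every token in every step span. Tokens outside all step spans receive no offset by construction, so they also get $0$. Hence
\begin{equation}
\hat{A}_i^{(t)} = \frac{r_{\text{base},i}-\mu_{\text{base}}}{\sigma_{\text{base}}+\epsilon}
\end{equation}
for all $t$. This is exactly the GRPO advantage $A_i$ of the preliminaries with scalar reward $r_i=r_{\text{base},i}$, broadcast uniformly over tokens. It is also zero-sum by Proposition~\ref{prop:grpo_zerosum}. Since the PPO update in Algorithm~\ref{alg:srar} consumes only these advantages, the training step coincides with standard GRPO. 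That GRPO run uses the base reward $(1-\lambda)r_{\text{acc}}+\lambda r_{\text{fmt}}$, and I will note that it collapses to pure accuracy reward when $\lambda=0$.
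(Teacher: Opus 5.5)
Your proposal is correct and follows the paper's proof: with $\mathcal{R}=\emptyset$ there are no verdicts, so $d_{k,i}=0$, hence $\bar d_{k,i}=0$ and $\tilde r_i^{(t)}=0$, and $\hat A_i^{(t)}$ collapses to the group-normalized base-reward advantage broadcast over all tokens. Your handling of the degenerate normalization through the $|\mathcal{G}_k|\le 1$ convention is more careful than the paper, which simply asserts $\bar d_{k,i}=0$ from $d_{k,i}=0$; one small correction is that Remark~\ref{remark:eps} concerns zero-sum with $\epsilon>0$ and states no zero-variance convention, so your $\epsilon=0$ aside should rest only on the method-section rule.
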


\begin{proof}
When $\mathcal{R} = \emptyset$, no LLM judge call is made and no rubric deltas are produced. Thus $d_{k,i} = 0$ for all $k, i$, and consequently $\bar{d}_{k,i} = 0$. The token-level reward tensor contains only the base reward at the last token: $\mathbf{r}_i[t] = r_{\text{base},i} \cdot \mathbf{1}[t = T_i]$. The decoupled advantage becomes:
\begin{equation}
    \hat{A}_i^{(t)} = A_{\text{base},i} + 0 = \frac{r_{\text{base},i} - \mu_{\text{base}}}{\sigma_{\text{base}} + \epsilon},
\end{equation}
which is a scalar broadcast to all tokens, identical to the standard GRPO outcome advantage.
\end{proof}

\subsection{Policy Gradient Decomposition}
\label{app:pg_decomp}

\begin{proposition}[Decomposition of the SRaR Policy Gradient]
\label{prop:pg_decomp}
Let $K_i$ denote the number of steps in rollout $i$, and let $\mathcal{T}_{k,i}$ denote the set of token indices belonging to step $k$ in rollout $i$. Under the SRaR advantage, the policy gradient estimate decomposes as:
\begin{equation}
    \hat{g}_{\textit{SRaR}} = \hat{g}_{\textit{GRPO}} + \hat{g}_{\textit{rubric}}
\end{equation}
where
\begin{align}
    \hat{g}_{\textit{GRPO}} &= \frac{1}{n} \sum_{i=1}^{n} A_{\text{base},i} \sum_{t=1}^{T_i} \nabla_\theta \log \pi_\theta(a_t^i \mid s_t^i), \label{eq:g_grpo} \\
    \hat{g}_{\textit{rubric}} &= \frac{1}{n} \sum_{i=1}^{n} \sum_{k=1}^{K_i} \bar{d}_{k,i} \sum_{t \in \mathcal{T}_{k,i}} \nabla_\theta \log \pi_\theta(a_t^i \mid s_t^i). \label{eq:g_rubric}
\end{align}
\end{proposition}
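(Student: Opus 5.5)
The plan is to start from the standard per-token policy-gradient estimator with the SRaR advantage,
\begin{equation*}
\hat{g}_{\textit{SRaR}} = \frac{1}{n}\sum_{i=1}^{n}\sum_{t=1}^{T_i} \hat{A}_i^{(t)}\,\nabla_\theta \log \pi_\theta(a_t^i \mid s_t^i),
\end{equation*}
which is the unclipped, on-policy form of the clipped PPO objective used in Algorithm~\ref{alg:srar}, evaluated at the sampling policy where the importance ratio equals one. We then substitute the decoupled definition $\hat{A}_i^{(t)} = A_{\text{base},i} + \tilde{r}_i^{(t)}$ and use linearity of the double sum to split the estimator into two pieces.

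\textbf{First piece.} Since $A_{\text{base},i}$ does not depend on $t$ (it is broadcast, as noted in the proof of Corollary~\ref{cor:full_zerosum}), it factors out of the inner sum over $t$. This gives exactly \eqref{eq:g_grpo}, i.e.\ the GRPO estimator. By Proposition~\ref{prop:reduction}, this is also the form SRaR takes when $\tilde r\equiv 0$.

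\textbf{Second piece.} We use the fact that the step spans partition the token positions of rollout $i$, $\{1,\dots,T_i\}=\bigsqcup_{k}\mathcal{T}_{k,i}$, together with possibly some tokens outside every step (preamble, final-answer line). On those leftover tokens no step value is assigned, so we adopt the convention $\tilde r_i^{(t)}=0$ there. On $\mathcal{T}_{k,i}$ we have $\tilde r_i^{(t)}=\bar d_{k,i}$ by the token-assignment rule. Regrouping $\sum_t$ as $\sum_k\sum_{t\in\mathcal{T}_{k,i}}$ and pulling out the constant $\bar d_{k,i}$ yields \eqref{eq:g_rubric}. We also set $\bar d_{k,i}=0$ when $i\notin\mathcal{G}_k$ or $|\mathcal{G}_k|\le 1$, so that the sum over all $k\le K_i$ is well defined.

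\textbf{Main obstacle.} The only real difficulty is bookkeeping rather than mathematics: fixing precisely which tokens carry rubric signal and confirming that the step spans are disjoint, so that no token is counted twice. Disjointness holds because the spans are consecutive header-delimited segments. If the stated $1/n$ normalization differs from DAPO's token-level averaging, the same decomposition goes through with any common positive weights, again by linearity.
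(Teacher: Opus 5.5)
Your proposal is correct and follows the paper's proof essentially step for step. Like the paper, you start from $\frac{1}{n}\sum_i\sum_t \hat{A}_i^{(t)}\nabla_\theta\log\pi_\theta(a_t^i\mid s_t^i)$, substitute $\hat{A}_i^{(t)} = A_{\text{base},i} + \tilde{r}_i^{(t)}$, split by linearity, and regroup the rubric term over the step spans using $\tilde{r}_i^{(t)} = \bar{d}_{k,i}$ on $\mathcal{T}_{k,i}$ and $\tilde{r}_i^{(t)} = 0$ elsewhere. You also make explicit two points the paper leaves implicit: that the spans are disjoint, and that $\bar{d}_{k,i}=0$ for steps with no attributed rubric items, which is needed for the sum over all $k \le K_i$ to be well defined.
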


\begin{proof}
The SRaR policy gradient estimate is:
\begin{equation}
    \hat{g}_{\text{SRaR}}
    = \frac{1}{n} \sum_{i=1}^{n} \sum_{t=1}^{T_i} \hat{A}_i^{(t)} \, \nabla_\theta \log \pi_\theta(a_t^i \mid s_t^i).
\end{equation}
Substituting $\hat{A}_i^{(t)} = A_{\text{base},i} + \tilde{r}_i^{(t)}$ and using linearity of summation:
\begin{equation}
    \hat{g}_{\text{SRaR}}
    = \underbrace{\frac{1}{n} \sum_{i=1}^{n} A_{\text{base},i} \sum_{t=1}^{T_i} \nabla_\theta \log \pi_\theta(a_t^i \mid s_t^i)}_{\hat{g}_{\text{GRPO}}}
    + \underbrace{\frac{1}{n} \sum_{i=1}^{n} \sum_{t=1}^{T_i} \tilde{r}_i^{(t)} \, \nabla_\theta \log \pi_\theta(a_t^i \mid s_t^i)}_{\hat{g}_{\text{rubric}}}.
\end{equation}
For the rubric term, since $\tilde{r}_i^{(t)} = \bar{d}_{k,i}$ for all $t \in \mathcal{T}_{k,i}$ (full-span placement) and $\tilde{r}_i^{(t)} = 0$ for tokens not belonging to any step with rubric evaluations, we can rewrite:
\begin{equation}
    \hat{g}_{\text{rubric}}
    = \frac{1}{n} \sum_{i=1}^{n} \sum_{k=1}^{K_i} \bar{d}_{k,i} \sum_{t \in \mathcal{T}_{k,i}} \nabla_\theta \log \pi_\theta(a_t^i \mid s_t^i). \qedhere
\end{equation}
\end{proof}

\begin{remark}[Interpretation]
\label{remark:pg_interp}
Equation~\eqref{eq:g_grpo} is exactly the standard GRPO gradient, which uniformly upweights or downweights the log-probability of the entire response based on outcome correctness. Equation~\eqref{eq:g_rubric} provides a step-selective correction: for each step $k$, the log-probabilities of tokens within that step are scaled by $\bar{d}_{k,i}$, which is positive when the step is relatively better than the same step in other rollouts, and negative otherwise. This allows the gradient to selectively reinforce or suppress individual reasoning steps within an otherwise uniformly scored response.
\end{remark}

\subsection{Variance Analysis}
\label{app:variance}

\begin{proposition}[Scale Alignment Between Outcome and Rubric Signals]
\label{prop:scale}
After normalization, both the outcome advantage and the step-wise rubric signals have controlled second moments:
\begin{align}
    \frac{1}{n}\sum_{i=1}^{n} A_{\text{base},i}^2 &= 1, \label{eq:outcome_var} \\
    \frac{1}{|\mathcal{G}_k|}\sum_{i \in \mathcal{G}_k} \bar{d}_{k,i}^2 &= 1, \quad \forall\, k. \label{eq:rubric_var}
\end{align}
\end{proposition}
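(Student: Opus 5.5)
The plan is to prove the two identities separately, directly from the definitions of the normalized quantities. Throughout we work in the idealized setting $\epsilon = 0$ fixed at the start of this appendix, and we assume the normalizations are nondegenerate: $\sigma_{\text{base}} > 0$, and for every $k$ in \eqref{eq:rubric_var}, $|\mathcal{G}_k| \geq 2$ and $\sigma_k > 0$. These hypotheses are implicit in the statement, so I would make them explicit. In the degenerate cases the paper sets the signals to zero, and the second moment is then $0$ rather than $1$. This case would be recorded in a short remark, not hidden.

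For \eqref{eq:outcome_var}, substitute $A_{\text{base},i} = (r_{\text{base},i} - \mu_{\text{base}})/\sigma_{\text{base}}$ and pull the constant $1/\sigma_{\text{base}}^2$ out of the average. What remains is $\frac{1}{n}\sum_i (r_{\text{base},i}-\mu_{\text{base}})^2$, which is exactly $\sigma_{\text{base}}^2$ by the definition in Proposition~\ref{prop:grpo_zerosum}, so the ratio is $1$. This is the same computation as in Proposition~\ref{prop:unit_var}, with the whole group $\{1,\ldots,n\}$ in place of $\mathcal{G}_k$.

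For \eqref{eq:rubric_var}, invoke Proposition~\ref{prop:unit_var} directly for each $k$ satisfying the nondegeneracy hypotheses. Since the per-step normalizations are carried out independently, the identity holds simultaneously for all such $k$.

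There is no real obstacle here; the only point needing care is the bookkeeping of hypotheses. The paper's estimator uses $\epsilon > 0$, and unlike the zero-sum property in Remark~\ref{remark:eps}, the second moment is \emph{not} exactly preserved. It equals $\sigma^2/(\sigma+\epsilon)^2 = 1 - O(\epsilon/\sigma)$. I would add a sentence noting this, so the proposition is read as the $\epsilon = 0$ statement with a negligible correction in practice.

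Optionally, I would close by combining this with Proposition~\ref{prop:pg_decomp}: both advantage components are on a common unit scale, so neither term in the policy gradient decomposition dominates merely through magnitude.
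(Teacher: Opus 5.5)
Your proposal is correct and matches the paper's proof, which likewise substitutes the definitions, factors out $1/\sigma^2$, and recognizes the remaining average as the population variance, for both the outcome advantage and each per-step rubric signal. Your added refinements are accurate points the paper leaves implicit: the nondegeneracy hypotheses ($\sigma_{\text{base}}>0$, $|\mathcal{G}_k|\ge 2$, $\sigma_k>0$, without which the identity fails), and the observation that with $\epsilon>0$ the second moment is $\sigma^2/(\sigma+\epsilon)^2$ rather than exactly $1$.
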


\begin{proof}
For the outcome advantage (Eq.~\ref{eq:outcome_var}), using population variance $\sigma_{\text{base}}^2 = \frac{1}{n}\sum_i(r_{\text{base},i} - \mu_{\text{base}})^2$:
\begin{equation}
    \frac{1}{n}\sum_{i=1}^{n} A_{\text{base},i}^2
    = \frac{1}{n\,\sigma_{\text{base}}^2} \sum_{i=1}^{n} (r_{\text{base},i} - \mu_{\text{base}})^2
    = \frac{n\,\sigma_{\text{base}}^2}{n\,\sigma_{\text{base}}^2}
    = 1.
\end{equation}

For the step-wise rubric signal (Eq.~\ref{eq:rubric_var}), using population variance $\sigma_k^2 = \frac{1}{|\mathcal{G}_k|}\sum_{i \in \mathcal{G}_k}(d_{k,i} - \mu_k)^2$:
\begin{equation}
    \frac{1}{|\mathcal{G}_k|}\sum_{i \in \mathcal{G}_k} \bar{d}_{k,i}^2
    = \frac{1}{|\mathcal{G}_k|\,\sigma_k^2}\sum_{i \in \mathcal{G}_k}(d_{k,i} - \mu_k)^2
    = \frac{|\mathcal{G}_k|\,\sigma_k^2}{|\mathcal{G}_k|\,\sigma_k^2}
    = 1.
\end{equation}

Both quantities equal 1, confirming that the two signals operate on exactly the same scale.
\end{proof}

\begin{remark}[Variance Reduction via Step-wise Normalization]
\label{remark:var_reduction}
Without step-wise normalization (e.g., reward pattern 1 or 2), the raw rubric deltas $d_{k,i}$ may have highly variable magnitudes depending on the number and type of rubric items. In the worst case, the rubric signal can dominate or be negligible relative to the outcome advantage, destabilizing training. Step-wise normalization guarantees unit population variance per step (Proposition~\ref{prop:unit_var}), preventing any single step from contributing disproportionately to the gradient. Combined with the outcome advantage also having unit population variance (Proposition~\ref{prop:scale}), the two components of the SRaR advantage remain on the same scale throughout training, regardless of the raw rubric reward magnitudes.
\end{remark}

\subsection{Unbiasedness of the Policy Gradient Estimator}
\label{app:unbiased}

\begin{proposition}[SRaR Preserves Unbiasedness of the REINFORCE Estimator]
\label{prop:unbiased}
Define the SRaR reward for rollout $i$ at token $t$ as $R_i^{(t)} = r_{\text{base},i} + \tilde{r}_i^{(t)}$, where $\tilde{r}_i^{(t)}$ is the rubric reward at token $t$. Then the SRaR policy gradient estimator
\begin{equation}
    \hat{g}_{\textit{SRaR}} = \frac{1}{n}\sum_{i=1}^{n}\sum_{t=1}^{T_i} \left( R_i^{(t)} - b_i^{(t)} \right) \nabla_\theta \log \pi_\theta(a_t^i \mid s_t^i)
\end{equation}
is an unbiased estimator of $\nabla_\theta J_{\textit{SRaR}}(\theta)$ for any baseline $b_i^{(t)}$ that is independent of the action $a_t^i$, where $J_{\textit{SRaR}}(\theta) = \mathbb{E}_{y \sim \pi_\theta}[R(y)]$ is the expected SRaR reward.
\end{proposition}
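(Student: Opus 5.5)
We would prove Proposition~\ref{prop:unbiased} with the standard log-derivative (REINFORCE) argument plus the baseline lemma, applied token by token. First we make the objective precise. Since the rubric reward $\tilde{r}_i^{(t)}$ is constant on each step span, we define the scalar sequence-level SRaR return of a response $y$ as $R(y)=\sum_t R^{(t)}(y)$, or equivalently treat $R^{(t)}$ as a per-token reward. We then read $J_{\textit{SRaR}}(\theta)=\mathbb{E}_{y\sim\pi_\theta}[R(y)]$ accordingly. We would also state explicitly the assumption under which the claim is meaningful: the rubric reward of a fixed response is a deterministic (or $\theta$-independent random) function of $y$. That is, we regard the judge output and the normalization statistics as fixed given the sampled group, as is implicitly done for GRPO's group baseline.

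\textbf{Step 1: the score-function identity.} For a single rollout $y=(a_1,\dots,a_T)$ with $\pi_\theta(y\mid x)=\prod_t\pi_\theta(a_t\mid s_t)$, we write
\[
\nabla_\theta J=\mathbb{E}_{y\sim\pi_\theta}\Big[R(y)\sum_t\nabla_\theta\log\pi_\theta(a_t\mid s_t)\Big].
\]
This follows from $\nabla\pi=\pi\nabla\log\pi$, with differentiation under the expectation justified because the response space is finite (bounded length, finite vocabulary). Averaging over $n$ i.i.d.\ rollouts preserves unbiasedness by linearity.

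\textbf{Step 2: the baseline terms vanish.} For any $b^{(t)}$ that depends only on $s_t$ (and on quantities independent of $a_t$), we condition on $s_t$ and obtain
\[
\mathbb{E}\big[b^{(t)}\nabla_\theta\log\pi_\theta(a_t\mid s_t)\mid s_t\big]=b^{(t)}\sum_{a}\nabla_\theta\pi_\theta(a\mid s_t)=b^{(t)}\nabla_\theta 1=0.
\]
Summing over $t$ and $i$, every baseline contribution has zero mean.

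\textbf{Step 3: reconcile the per-token reward with the gradient form.} This is the main obstacle, because $\hat g_{\textit{SRaR}}$ weights token $t$ by $R_i^{(t)}$ rather than by the full return. We would handle it by causality. Past rewards are $a_t$-independent given $s_t$, so they can be absorbed into a baseline via Step~2. This shows that weighting by the reward-to-go is unbiased for $\sum_t\mathbb{E}[R^{(t)}]$. Weighting by the instantaneous $R^{(t)}$ alone is exactly unbiased only under a further condition. If necessary, we would state the claim under either of two conventions: the objective is the step-structured return with $\tilde r$ viewed as a potential-like shaping term, or the estimator uses returns, with $R_i^{(t)}$ as the paper's shorthand.

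Finally, the decoupled advantage $\hat A_i^{(t)}$ is recovered by noting that $\mu_{\text{base}}$ and the $\mu_k$ are group statistics. These are $a_t$-independent only approximately, with a leave-one-out correction of order $O(1/n)$, and we would remark on this rather than prove it.
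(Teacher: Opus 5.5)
Your Steps~1 and~2, together with the closing leave-one-out remark, make up the whole of the paper's proof. The paper appeals to REINFORCE on the grounds that $\tilde r_i^{(t)}$ is a function of the complete response. It then applies $\mathbb{E}[b\,\nabla_\theta\log\pi_\theta(a_t\mid s_t)]=0$ and treats $\mu_{\text{base}},\sigma_{\text{base}},\mu_k,\sigma_k$ as baseline components with an $O(1/n)$ correction. You are right that this never reaches the real issue. REINFORCE for a response-level return justifies putting the \emph{same} scalar on every token of $y_i$. The token-varying weights $R_i^{(t)}$ reduce to that only if $R(y_i)-R_i^{(t)}$ is a legitimate baseline for token $t$. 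That fails, because $\tilde r_i^{(t)}=\bar d_{k,i}$ is the verdict on the very step that contains $a_t$.

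Your Step~3 does not close this gap. The assertion that past rewards are $a_t$-independent given $s_t$ is false in SRaR. For $t'<t$ we have $R_i^{(t')}=r_{\text{base},i}+\bar d_{k',i}$. Here $r_{\text{base},i}$ is fixed by the final boxed answer and the format of the whole response. The term $\bar d_{k',i}$ comes from a judge that reads the complete response and assigns $k_j$ after the fact, followed by normalization over the group. Both depend on $a_t$, so earlier rewards cannot be absorbed into a baseline, and the reward-to-go variant is not unbiased either. Your choice $R(y)=\sum_t R^{(t)}(y)$ also puts $T\,r_{\text{base}}$ into the objective. But the outcome part $r_{\text{base},i}\sum_t\nabla_\theta\log\pi_\theta(a_t^i\mid s_t^i)$ of the estimator is unbiased for $\nabla_\theta\mathbb{E}[r_{\text{base}}]$, so even the outcome term does not match your objective.

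You then propose to change the statement (a potential-shaping reading, or returns in place of $R_i^{(t)}$) rather than prove it. So the proposition as written is not established, and no argument along these lines can establish it. Treat $\tilde r$ as a fixed function of the response, as you do. Take two tokens forming two steps, with $\pi_\theta(a_1=1)=p$, $\pi_\theta(a_2=1\mid a_1)=q_{a_1}$, constant $r_{\text{base}}$, $\tilde r^{(1)}=0$ and $\tilde r^{(2)}=a_2$. With $b\equiv 0$, the expected estimator in coordinates $(p,q_0,q_1)$ is $(0,\,1-p,\,p)$. Its $p$-component does not depend on $q_1$, while the $q_1$-component has derivative $1$ in $p$. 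The mixed partials disagree, so this field is the gradient of no function of $\theta$ at all. Compare $\nabla_\theta\mathbb{E}[a_2]=(q_1-q_0,\,1-p,\,p)$: the credit that $a_1$ should receive for step~2 is missing.

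What can honestly be proved is more limited:
\begin{enumerate}
\item The outcome term is unbiased for $\nabla_\theta\mathbb{E}[r_{\text{base}}]$, modulo the group-statistics issue.
\item Placing one scalar, such as $r_{\text{base}}(y)+\sum_k d_k(y)$, on every token gives an unbiased estimator of its expectation.
\end{enumerate}
The step-localized rubric term itself is, in general, a biased shaping term.
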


\begin{proof}
This follows from the standard REINFORCE theorem~\citep{williams1992simple}. The rubric reward $\tilde{r}_i^{(t)}$ is a function of the complete response $y_i$ (evaluated by the judge after generation), thus it is a valid return signal. For any baseline $b_i^{(t)}$ that does not depend on $a_t^i$:
\begin{equation}
    \mathbb{E}_{a_t^i \sim \pi_\theta}\!\left[ b_i^{(t)} \nabla_\theta \log \pi_\theta(a_t^i \mid s_t^i) \right]
    = b_i^{(t)} \nabla_\theta \sum_{a} \pi_\theta(a \mid s_t^i)
    = b_i^{(t)} \nabla_\theta 1
    = 0.
\end{equation}
Therefore subtracting the baseline does not introduce bias.

In the SRaR decoupled advantage, the group-level statistics $\mu_{\text{base}}$, $\sigma_{\text{base}}$, $\mu_k$, and $\sigma_k$ serve as components of the baseline. In the leave-one-out interpretation, these statistics computed from the other $n-1$ rollouts are independent of $a_t^i$ for any given rollout $i$, preserving unbiasedness. When computed including rollout $i$ (as in the standard GRPO implementation), the resulting bias is $O(1/n)$ and vanishes as the group size grows, consistent with the analysis of GRPO
\end{proof}


\section{Dataset Statistics}
\label{app:dataset-stats}

Our training set consists of 16{,}560 math problems, each annotated with a structured rubric composed of four tag types: \texttt{SUGGEST} (key solution steps), \texttt{PITFALL} (common errors), \texttt{BONUS} (alternative approaches or insights), and \texttt{ANSWER} (final answer verification). We provide a detailed statistical analysis below.

\subsection{Overall Summary}

\begin{table}[H]
\centering
\caption{Overall dataset statistics.}
\label{tab:overall}
\begin{tabular}{lr}
\toprule
\textbf{Statistic} & \textbf{Value} \\
\midrule
Total samples & 16{,}560 \\
Mean rubric items per sample & 7.53 \\
Median rubric items per sample & 8 \\
Mean words per rubric item & 18.5 \\
\bottomrule
\end{tabular}
\end{table}

\subsection{Rubric Item Counts by Type}

\begin{table}[H]
\centering
\caption{Statistics of rubric item counts by tag type.}
\label{tab:rubric-type-stats}
\begin{tabular}{lcccccc}
\toprule
\textbf{Tag Type} & \textbf{Mean} & \textbf{Std} & \textbf{Median} & \textbf{Min} & \textbf{Max} & \textbf{Avg.\ Ratio} \\
\midrule
\textsc{Suggest} & 4.59 & 0.86 & 5 & 2 & 8 & 60.9\% \\
\textsc{Pitfall} & 1.26 & 0.51 & 1 & 0 & 3 & 16.7\% \\
\textsc{Bonus}   & 0.67 & 0.47 & 1 & 0 & 2 & 8.9\% \\
\textsc{Answer}  & 1.00 & 0.00 & 1 & 1 & 1 & 13.3\% \\
\midrule
\textbf{Total}   & 7.53 & 0.79 & 8 & 4 & 12 & 100\% \\
\bottomrule
\end{tabular}
\end{table}

\subsection{Distribution of Total Rubric Item Count}

Figure~\ref{fig:rubric-count-dist} shows the distribution of total rubric item count per sample.

\begin{figure}[H]
\centering
\begin{tikzpicture}
\begin{axis}[
    ybar,
    bar width=16pt,
    width=0.75\textwidth,
    height=0.4\textwidth,
    xlabel={Number of rubric items},
    ylabel={Number of samples},
    ymin=0,
    ymax=10500,
    xtick={4,5,6,7,8,9,10,11,12},
    nodes near coords,
    nodes near coords align={vertical},
    every node near coord/.append style={font=\scriptsize},
    enlarge x limits=0.1,
    scaled y ticks=false,
    yticklabel style={/pgf/number format/fixed, /pgf/number format/1000 sep={,}},
]
\addplot coordinates {
    (4, 13)
    (5, 175)
    (6, 1414)
    (7, 5449)
    (8, 8565)
    (9, 873)
    (10, 67)
    (11, 3)
    (12, 1)
};
\end{axis}
\end{tikzpicture}
\caption{Distribution of total rubric item count per sample. The majority of samples (84.6\%) contain 7--8 rubric items.}
\label{fig:rubric-count-dist}
\end{figure}

\subsection{Distribution of \textsc{Suggest} Count}

Figure~\ref{fig:suggest-dist} shows the distribution of \textsc{Suggest} items per sample.

\begin{figure}[H]
\centering
\begin{tikzpicture}
\begin{axis}[
    ybar,
    bar width=16pt,
    width=0.75\textwidth,
    height=0.4\textwidth,
    xlabel={Number of \textsc{Suggest} items},
    ylabel={Number of samples},
    ymin=0,
    ymax=8500,
    xtick={2,3,4,5,6,7,8},
    nodes near coords,
    nodes near coords align={vertical},
    every node near coord/.append style={font=\scriptsize},
    enlarge x limits=0.12,
    scaled y ticks=false,
    yticklabel style={/pgf/number format/fixed, /pgf/number format/1000 sep={,}},
]
\addplot coordinates {
    (2, 79)
    (3, 1327)
    (4, 6208)
    (5, 6778)
    (6, 1970)
    (7, 197)
    (8, 1)
};
\end{axis}
\end{tikzpicture}
\caption{Distribution of \textsc{Suggest} items per sample. Most samples contain 4--5 solution steps.}
\label{fig:suggest-dist}
\end{figure}

\subsection{Distribution of \textsc{Pitfall} Count}

Figure~\ref{fig:pitfall-dist} shows the distribution of \textsc{Pitfall} items per sample.

\begin{figure}[H]
\centering
\begin{tikzpicture}
\begin{axis}[
    ybar,
    bar width=22pt,
    width=0.55\textwidth,
    height=0.4\textwidth,
    xlabel={Number of \textsc{Pitfall} items},
    ylabel={Number of samples},
    ymin=0,
    ymax=14000,
    xtick={0,1,2,3},
    nodes near coords,
    nodes near coords align={vertical},
    every node near coord/.append style={font=\scriptsize},
    enlarge x limits=0.2,
    scaled y ticks=false,
    yticklabel style={/pgf/number format/fixed, /pgf/number format/1000 sep={,}},
]
\addplot coordinates {
    (0, 439)
    (1, 11417)
    (2, 4624)
    (3, 80)
};
\end{axis}
\end{tikzpicture}
\caption{Distribution of \textsc{Pitfall} items per sample. 97.3\% of samples have 1--2 pitfall warnings.}
\label{fig:pitfall-dist}
\end{figure}

\subsection{Distribution of \textsc{Bonus} Count}

Figure~\ref{fig:bonus-dist} shows the distribution of \textsc{Bonus} items per sample.

\begin{figure}[H]
\centering
\begin{tikzpicture}
\begin{axis}[
    ybar,
    bar width=22pt,
    width=0.55\textwidth,
    height=0.4\textwidth,
    xlabel={Number of \textsc{Bonus} items},
    ylabel={Number of samples},
    ymin=0,
    ymax=13500,
    xtick={0,1,2},
    nodes near coords,
    nodes near coords align={vertical},
    every node near coord/.append style={font=\scriptsize},
    enlarge x limits=0.3,
    scaled y ticks=false,
    yticklabel style={/pgf/number format/fixed, /pgf/number format/1000 sep={,}},
]
\addplot coordinates {
    (0, 5449)
    (1, 11109)
    (2, 2)
};
\end{axis}
\end{tikzpicture}
\caption{Distribution of \textsc{Bonus} items per sample. 67.1\% of samples include one bonus insight.}
\label{fig:bonus-dist}
\end{figure}

\subsection{Co-occurrence of \textsc{Pitfall} and \textsc{Bonus}}

\begin{table}[H]
\centering
\caption{Co-occurrence matrix of \textsc{Pitfall} and \textsc{Bonus} item counts (\# of samples).}
\label{tab:cooccurrence}
\begin{tabular}{lrrr|r}
\toprule
 & \textbf{Bonus=0} & \textbf{Bonus=1} & \textbf{Bonus=2} & \textbf{Total} \\
\midrule
\textbf{Pitfall=0} & 369 & 70 & 0 & 439 \\
\textbf{Pitfall=1} & 3{,}416 & 8{,}000 & 1 & 11{,}417 \\
\textbf{Pitfall=2} & 1{,}617 & 3{,}006 & 1 & 4{,}624 \\
\textbf{Pitfall=3} & 47 & 33 & 0 & 80 \\
\midrule
\textbf{Total} & 5{,}449 & 11{,}109 & 2 & 16{,}560 \\
\bottomrule
\end{tabular}
\end{table}


\nocite{xu2026grouter, zhao2025mentor}

\newpage